%% file: neurips_2026.tex
\documentclass{article}

\PassOptionsToPackage{numbers, compress}{natbib}

\usepackage[preprint]{neurips_2026}

\usepackage[utf8]{inputenc} 
\usepackage[T1]{fontenc}    
\usepackage[hidelinks]{hyperref}       
\usepackage{url}            
\usepackage{graphicx}       
\usepackage{wrapfig}        
\usepackage{float}          
\usepackage{booktabs}       
\usepackage{amsfonts}       
\usepackage{nicefrac}       
\usepackage{microtype}      
\usepackage{xcolor}         
\usepackage{amsthm}
\usepackage{amsmath}

\usepackage{algorithm}
\usepackage{algpseudocode}

\usepackage{tabularx}
\usepackage{enumitem}
\usepackage{array}
\usepackage{titlesec}

\usepackage{listings}
\usepackage{caption}

\definecolor{promptbg}{RGB}{246,247,250}
\definecolor{promptrule}{RGB}{175,182,200}
\definecolor{prompttitle}{RGB}{55,65,100}
\definecolor{rqboxbg}{RGB}{248,249,252}
\definecolor{rqboxrule}{RGB}{182,190,208}
\definecolor{rqboxtitle}{RGB}{45,55,90}

\lstdefinestyle{promptstyle}{
  basicstyle=\ttfamily\footnotesize,
  breaklines=true,
  breakatwhitespace=false,
  columns=fullflexible,
  keepspaces=true,
  backgroundcolor=\color{promptbg},
  frame=lines,
  framerule=0.5pt,
  rulecolor=\color{promptrule},
  xleftmargin=10pt,
  xrightmargin=6pt,
  aboveskip=6pt,
  belowskip=6pt,
  captionpos=t,
  abovecaptionskip=0pt,
  belowcaptionskip=2pt,
}

\DeclareCaptionFormat{promptcap}{%
  \colorbox{prompttitle}{%
    \parbox{\dimexpr\linewidth-2\fboxsep-20pt\relax}{%
      \color{white}\bfseries\small #1#2#3%
    }%
  }\vspace{-1pt}%
}
\newcommand{\rqfinding}[2]{%
  \par\vspace{0.2em}
  \noindent\begingroup
  \setlength{\fboxsep}{5pt}%
  \setlength{\fboxrule}{0.5pt}%
  \fcolorbox{rqboxrule}{rqboxbg}{%
    \begin{minipage}{\dimexpr\linewidth-2\fboxsep-2\fboxrule\relax}
      \textbf{\color{rqboxtitle}#1.} #2
    \end{minipage}%
  }%
  \endgroup
  \par\vspace{0.2em}
}

\newcommand{\xinmiao}[1]{}
\newcommand{\jinwei}[1]{}
\newcommand{\claude}[1]{}

\setlist[itemize]{leftmargin=1.2em,itemsep=1pt,topsep=2pt}
\setlist[enumerate]{leftmargin=1.4em,itemsep=1pt,topsep=2pt}
\titleformat{\section}{\bfseries\normalsize}{\thesection.}{0.4em}{}
\titleformat{\subsection}{\bfseries\normalsize}{\thesubsection.}{0.4em}{}
\titlespacing*{\section}{0pt}{1.25ex plus .2ex minus .1ex}{0.7ex}
\titlespacing*{\subsection}{0pt}{1.0ex plus .2ex minus .1ex}{0.5ex}

\newtheorem{proposition}{Proposition}

\title{PrefixGuard: From Large Language Model Agent Traces to Online Failure-Warning Monitors}

\author{
  Xinmiao Huang\textsuperscript{1},
  Jinwei Hu\textsuperscript{1},
  Rajarshi Roy\textsuperscript{1},
  Changshun Wu\textsuperscript{2},
  Yi Dong\textsuperscript{1,*},
  Xiaowei Huang\textsuperscript{1}\thanks{Corresponding author: \texttt{xiaowei.huang@liverpool.ac.uk}, \texttt{yi.dong@liverpool.ac.uk}} \\
  \textsuperscript{1}University of Liverpool
  \quad
  \textsuperscript{2}Université Grenoble Alpes
}

\begin{document}

\maketitle

\input{content/abstract.tex}

\input{content/intro.tex}

\input{content/related_work.tex}

\input{content/method.tex}

\input{content/experiments.tex}

\input{content/limitations_conclusion.tex}

\input{references.tex}
\input{content/appendix.tex}

\end{document}

%% file: content/abstract.tex
\begin{abstract}
    Large language model (LLM) agents now execute long, tool-using tasks where final outcome checks can arrive too late for intervention.
    Online warning requires lightweight prefix monitors over heterogeneous traces, but hand-authored event schemas are brittle and deployment-time LLM judging is costly.
    We introduce \textbf{PrefixGuard}, a trace-to-monitor framework with an offline \textit{StepView} induction step followed by supervised monitor training. StepView induces deterministic typed-step adapters from raw trace samples, and the monitor learns an event abstraction and prefix-risk scorer from terminal outcomes.
    Across WebArena, $\tau^2$-Bench, SkillsBench, and TerminalBench, the strongest PrefixGuard monitors reach 0.900/0.710/0.533/0.557 AUPRC. Using the strongest backend within each representation, they improve over raw-text controls by an average of +0.137 AUPRC. LLM judges remain substantially weaker under the same prefix-warning protocol.
    We also derive an observability ceiling on score-based area under the precision-recall curve (AUPRC) that separates monitor error from failures lacking evidence in the observed prefix.
    For finite-state audit, post-hoc deterministic finite automaton (DFA) extraction remains compact on WebArena and $\tau^2$-Bench (29 and 20 states) but expands to 151 and 187 states on SkillsBench and TerminalBench.
    Finally, first-alert diagnostics show that strong ranking does not imply deployment utility: WebArena ranks well yet fails to support low-false-alarm alerts, whereas $\tau^2$-Bench and TerminalBench retain more actionable early alerts.
    Together, these results position PrefixGuard as a practical monitor-synthesis recipe with explicit diagnostics for when prefix warnings translate into actionable interventions.
\end{abstract}

%% file: content/intro.tex
\section{Introduction}
\label{sec:intro}

Frontier LLM agents capable of long-horizon multi-step tasks~\citep{wang2026youragenttheirasset,xie2024osworld} are increasingly deployed in high-stakes settings, such as automated software engineering~\citep{yang2024sweagent}, cybersecurity~\citep{Hu_Dong_Sun_Huang_2026}, and financial management~\citep{xie2026finmmeval}, where a single erroneous action can cause irreversible damage long before the final task verifier fires.
This creates urgent demand for \emph{online warning signals} that flag trajectory drift toward failure in real time.
Existing approaches fall short on complementary dimensions:
(i) classical runtime verification~\citep{leucker2009brief,bauer2011runtime} assumes a stable, hand-authored mapping from raw traces to events, which is brittle for heterogeneous agent traces and evolving tool schemas.
(ii) LLM-as-judge~\citep{zheng2023judging} is too expensive for per-prefix deployment.
(iii) predictive prefix classifiers can recover signal but do not by themselves yield calibrated monitor state or inspectable symbolic artifacts~\citep{teinemaa2019outcome,tax2017predictive}.
The resulting challenge is not only whether failures are predictable from
prefixes, but whether raw traces can be converted into an online monitor
whose state is cheap, whose evidence is stable across trace formats, and
whose limits are diagnosable when warning fails.


We address these limitations by treating online prefix warning as a \emph{data-driven trace to monitor synthesis} problem.
Given raw execution traces and terminal outcomes, we derive fixed $H$-step warning labels and learn a monitor without hand-authoring the event alphabet or step-level root-cause annotations.
The paper studies four questions covering prefix-warning signal, trace representation, finite-state compression, and whether ranked risk scores can support low-FAR alarms.
The last question separates ranking from deployment utility.

We present \textbf{PrefixGuard}, a modular neural-symbolic framework for trace to monitor synthesis.
PrefixGuard first addresses the raw-trace interface via \textbf{StepView}, a one-time LLM-assisted offline induction step that generates deterministic adapters for heterogeneous trace formats.
It then trains a \textbf{differentiable event abstraction layer} jointly with a replaceable monitor backend, learning a discrete failure-aligned alphabet end-to-end from the prefix-warning objective.
The scoring backend can be neural or structured; after training, hard symbols can also be compiled into deterministic finite automata (DFAs) as post-hoc audit artifacts.
In this paper we instantiate the framework (Figure~\ref{fig:method_overview}) with GRU, Transformer, and soft-FSM monitors, plus extracted DFA audits, to study prediction quality, calibration, and the boundary of finite-state auditability.

\begin{figure}[H]
      \centering
      \includegraphics[width=\linewidth]{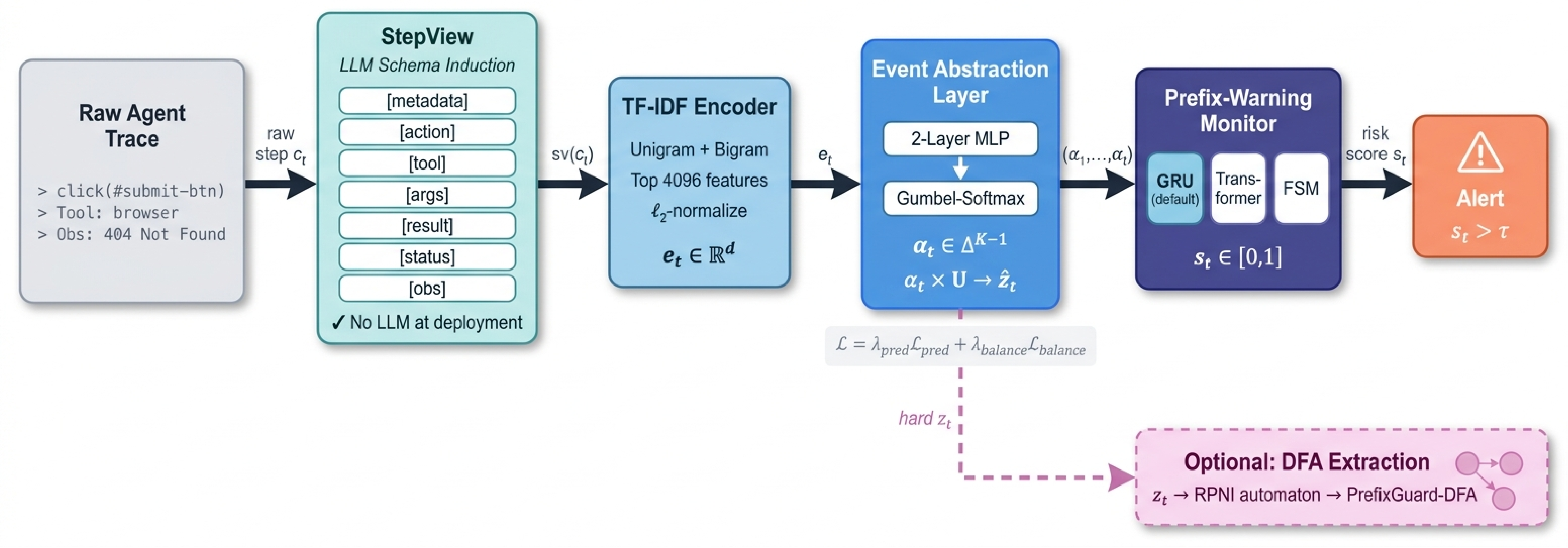}
      \caption{PrefixGuard pipeline.
            StepView converts raw steps to typed fields.
            Term frequency-inverse document frequency (TF-IDF) encoding and event abstraction produce learned symbols, and the monitor scores prefix risk online.
            Hard symbols can be compiled into PrefixGuard-DFA to evaluate compact finite-state audit.}
      \label{fig:method_overview}
\end{figure}

We evaluate PrefixGuard across four diverse agent benchmarks, WebArena (browsers), $\tau^2$-Bench (dialogue), SkillsBench (coding), and TerminalBench (CLI).
The evaluation uses these benchmarks as diagnostic regimes rather than a single leaderboard.
For \emph{warning signal} (RQ1), zero-shot LLM judges are weak under the matched prefix-warning protocol.
Non-sequential probes show that outcome-labeled prefixes still contain learnable signal, motivating monitor synthesis rather than repeated LLM judging.
For \emph{trace representation} (RQ2), StepView improves over the matched Raw-text control GRU by $+0.03$--$+0.22$ AUPRC, and field ablations show benchmark-specific dependence on individual fields.
For \emph{finite-state compression} (RQ3), neural monitors are strongest.
Post-hoc DFA extraction shows where exact finite-state audit remains compact. WebArena is the clearest regime, $\tau^2$-Bench is compact but concentrated, and SkillsBench/TerminalBench expand to larger monolithic DFAs.
For \emph{deployment utility} (RQ4), prevalence calibration, observability probes, and first-alert diagnostics under false-alarm rate (FAR) constraints show why raw AUPRC alone is not enough. WebArena is \emph{rankable but not alarm-separable}.
It reaches high AUPRC but mostly supports terminal-window triage rather than early low-FAR alerts.
$\tau^2$-Bench and TerminalBench retain stronger failed-trajectory and early-intervention recall despite lower raw AUPRC.
An AUPRC ceiling provides a diagnostic for how much visible failure evidence could be recovered from trace-only prefixes.

We highlight four primary contributions:
\begin{itemize}[leftmargin=*,itemsep=1pt,topsep=2pt,parsep=0pt]
      \item \textbf{Raw-trace monitor synthesis.} We formulate online
            prefix warning as monitor synthesis from raw LLM-agent traces and
            introduce PrefixGuard, which avoids hand-authored event alphabets and
            deployment-time LLM inference.
      \item \textbf{Typed trace representation.}
            StepView uses a one-time offline adapter to expose post-action evidence
            in fixed fields, allowing the same warning objective to run across
            browser, dialogue, coding, and CLI traces.
      \item \textbf{Diagnostic boundary for finite-state auditability.}
            We learn a failure-aligned discrete alphabet with GRU, Transformer,
            and soft-FSM monitors, then extract DFAs post hoc to map where
            finite-state audit remains compact and where exact automaton
            constraints lose warning signal or expand the audit surface.
      \item \textbf{Deployment diagnostics beyond ranking.} We pair an AUPRC
            ceiling with observability and first-alert diagnostics to separate
            ranking scale, visible prefix evidence, and low-FAR alert utility.
\end{itemize}

%% file: content/related_work.tex
\section{Related Work}
\label{sec:related}

\textbf{LLM-agent evaluation and judges.}
Agent benchmarks assign success after the trajectory ends via a task verifier~\citep{zhou2024webarena,barres2025tau2bench,li2026skillsbench,merrill2026terminalbench}, and LLM-as-judge methods~\citep{zheng2023judging} offer retrospective semantic assessment.
\textsc{PrefixGuard} instead learns domain-calibrated temporal statistics from outcome-labeled prefixes, producing online risk scores at each step without deployment-time LLM inference.

\textbf{Runtime verification and specification mining.}
Classical runtime verification and specification-based monitoring~\citep{leucker2009brief,bauer2011runtime,bartocci2018specification} monitor traces against formal properties over domain-specific signals and events, while specification mining~\citep{ammons2002mining,lo2006mining,li2014specification} infers likely specifications from observed behaviors.
Both lines of work assume a stable observation vocabulary and formalism.
That assumption breaks on LLM-agent traces, which mix browser actions, tool calls, and dialogue turns across evolving formats.
\textsc{StepView} replaces manual schema authoring with a one-time LLM-assisted induction step, producing a typed adapter from a handful of raw trace examples with no deployment-time LLM dependency.

\textbf{Trace abstraction and predictive process monitoring.}
Dialogue-flow extraction~\citep{burdisso2024dialog2flow,sreedhar2024unsupervised} and predictive process monitoring~\citep{teinemaa2019outcome,tax2017predictive} target coverage of recurrent behaviors or typed workflow events rather than imminent failure risk on heterogeneous traces.
Alarm-based systems~\citep{fahrenkrog2022fire} extend this to prescriptive interventions.
Early time-series classification~\citep{dachraoui2015adaptive,mori2017early} motivates our fixed-horizon warning setup, where positives are defined by a finite window before terminal failure.
\S\ref{sec:problem} formalizes this label after introducing trajectory notation.
\textsc{PrefixGuard} applies this setup to LLM-agent traces, where the event alphabet and representations must be jointly learned rather than assumed fixed.

\textbf{Auditable neural-symbolic monitors.}
Finite automata extracted from recurrent networks~\citep{weiss2018extracting}, interpretable DFA sequence classifiers learned by discrete optimization~\citep{shvo2021interpretable}, and the AALpy learning framework~\citep{musovic2022aalpy,vonberg2025extending} supply inspectable state machines.
These approaches operate over fixed symbolic observations or queryable systems and have not been applied to online prefix warning over heterogeneous LLM-agent traces.
\textsc{PrefixGuard} uses post-hoc DFA extraction as a boundary diagnostic.
Learned symbols can be compiled into calibrated state-risk machines, but our cross-benchmark audit shows that exact finite-state inspection remains reliable only when the induced automaton is compact and risk-separating.
Appendix~\ref{app:extended_related_work} provides an extended related work discussion.

%% file: content/method.tex
\section{Problem Formulation}
\label{sec:problem}

\textbf{Notations.}
Let $\mathcal{C}$ denote the space of possible execution steps, where each step $c \in \mathcal{C}$ is a structured record containing the agent's role, the tool invoked, the arguments, and the environment's response.
A \emph{trajectory} of length $T$ is an ordered sequence $\tau = (c_1, c_2, \ldots, c_T) \in \mathcal{C}^T$.
We use \emph{raw trace} for an original benchmark log before StepView conversion and \emph{trajectory} for the ordered step sequence used by the learning problem.
A \emph{prefix} of length $t \in [1, T]$ is denoted as $\tau_{:t} = (c_1, \ldots, c_t) \in \mathcal{C}^*$, representing the partial observation of an ongoing task.
Each trajectory is associated with a ground-truth binary outcome $y \in \{0, 1\}$, where $y=1$ indicates task success and $y=0$ indicates failure, as determined by a task-specific verifier $\mathcal{V}: \mathcal{C}^* \to \{0, 1\}$.

\textbf{Imminent Failure Labeling.}
The objective of prefix warning is to raise risk alerts as a failed trajectory approaches the terminal failure window.
Given a fixed \emph{inclusive failure horizon} $H \in \mathbb{Z}^+$, we assign a binary target label $p_{t}$ to each prefix $\tau_{:t}$ of a trajectory $(\tau, y)$:
\begin{equation}
    p_t = \mathbb{I} \left[ y = 0 \;\land\; t \ge T - H \right],
    \label{eq:labeling}
\end{equation}
where $\mathbb{I}[\cdot]$ is the indicator function.
Under this formulation, a prefix is considered a \emph{positive warning target} if and only if it belongs to a failed trajectory and has at most $H$ remaining steps, i.e., $T-t \leq H$.
For a failed trajectory this inclusive horizon yields up to $H{+}1$ positive prefix positions, including the terminal prefix.
All prefixes of successful trajectories, and prefixes of failed trajectories with more than $H$ remaining steps, are labeled as negatives ($p_t = 0$).
Prefixes are cumulative online states: a positive near-end prefix contains the visible history up to step $t$, so earlier tool errors and recovery attempts remain available to the monitor at later warning points.
The scoring input remains causal, using only $\tau_{:t}$.

\textbf{Prefix-Warning Task.}
The learning task is to find a monitor function $f_\theta: \mathcal{C}^* \to [0, 1]$, parameterized by $\theta$, that maps an arbitrary prefix $\tau_{:t}$ to a risk score $s_t \in [0, 1]$.
Given a distribution of trajectories $\mathcal{D}$, the optimal $\theta^*$ is obtained by minimizing the expected binary cross-entropy loss aggregated over all prefix positions:
\begin{equation}
    \min_{\theta} \mathbb{E}_{(\tau, y) \sim \mathcal{D}} \left[ \frac{1}{T}\sum_{t=1}^{T} \mathcal{L}_{\text{BCE}}(f_\theta(\tau_{:t}), p_t) \right].
    \label{eq:objective}
\end{equation}
The per-trajectory $1/T$ normalization ensures equal weighting across trajectories of different lengths.
At test time, the monitor raises an alert at step $t$ if $s_t > \gamma$, where $\gamma$ is a threshold calibrated on a validation set.
The system is evaluated based on its ability to maximize AUPRC across all prefixes, which measures the ranking quality of risk scores against the imminent failure targets.

\subsection{A Diagnostic Observability Ceiling}
\label{sec:observability_ceiling}

Before describing PrefixGuard, we characterize a representation-level limit on any trace-only prefix-warning method.
Here \emph{observable} is a statement about the current prefix representation, not about knowing the future verifier outcome.
An observable failed prefix is a positive warning target whose already-seen trace contains distinguishable evidence, such as repeated tool errors, invalid retries, abnormal state, or a clear drift away from the task goal.
A hidden failed prefix is positive only in hindsight: at the current time its visible trace is distributed like a negative prefix, and the failure evidence appears only in future steps or in the terminal verifier outcome.
Building on prevalence-sensitive PR analysis~\citep{davis2006relationship,boyd2012unachievable,boyd2013area} and contaminated-distribution label-noise models~\citep{scott2013classification,menon2015learning}, let $\pi\in[0,1]$ be the fraction of positive warning prefixes that are observable in this representation:
\[
    P(x\mid p{=}1)=\pi P_{\mathrm{obs}}+(1-\pi)P_{\mathrm{neg}},
    \qquad
    P_{\mathrm{neg}}:=P(x\mid p{=}0),
\]
where $x$ denotes the observed prefix representation.
Even with unlimited training data, a trace-only scorer cannot rank the hidden component above negatives from the observed trace alone, inducing an AUPRC ceiling.

\begin{proposition}[AUPRC observability ceiling]
    \label{prop:auprc_ceiling}
    Under the mixture above with positive-prefix rate $r\in(0,1)$, for any monitor $f$ with continuous score distributions the population AUPRC satisfies
    \[
        \mathrm{AUPRC}(f) \;\leq\; \mathcal{A}(\pi,r) \;:=\; \pi + \frac{r(1-\pi)^2}{1-\pi r} + \frac{r\pi(1-\pi)(1-r)}{(1-\pi r)^2}\ln\!\frac{1}{\pi r},
    \]
    with $\mathcal{A}(0,r){=}r$ and $\mathcal{A}(1,r){=}1$.
    The bound is tight and $\mathcal{A}$ is strictly increasing in $\pi$.
    The proof is in Appendix~\ref{app:observability_proofs}.
\end{proposition}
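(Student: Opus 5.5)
The plan is to reduce the AUPRC bound to a pointwise bound on the ROC curve, identify the extremal ROC curve, and integrate in closed form. For a monitor $f$ with continuous score distributions, let $u\in[0,1]$ be the false-positive rate at a threshold and $\mathrm{TPR}_f(u)$ the corresponding true-positive rate. Under the mixture $P(x\mid p{=}1)=\pi P_{\mathrm{obs}}+(1-\pi)P_{\mathrm{neg}}$, the hidden component is distributed exactly like negatives. So at every threshold
\[
\mathrm{TPR}_f(u)=\pi\,F_{\mathrm{obs}}(u)+(1-\pi)\,u\;\le\;\pi+(1-\pi)u,
\]
where $F_{\mathrm{obs}}(u)\le 1$ is the fraction of observable positives scored above that threshold. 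With prevalence $r$, precision at the threshold is $r\,\mathrm{TPR}/(r\,\mathrm{TPR}+(1-r)u)$. For fixed recall $R$ this is decreasing in $u$. Any $f$ attaining recall $R$ needs $u\ge u^*(R):=\max\{0,(R-\pi)/(1-\pi)\}$, so its precision at recall $R$ is at most that of the extremal curve $\mathrm{TPR}^*(u)=\pi+(1-\pi)u$. This is the Davis--Goadrich ROC-to-PR dominance, used here only in its elementary pointwise form. Integrating precision over recall $R\in[0,1]$ then gives $\mathrm{AUPRC}(f)\le\mathrm{AUPRC}^*$.

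Next I compute $\mathrm{AUPRC}^*$ for the extremal curve. On recall $[0,\pi]$ we have $u=0$, so precision is $1$ and this piece contributes $\pi$. On the remaining part, parametrize $R=\pi+(1-\pi)u$ with $u\in[0,1]$, so $dR=(1-\pi)\,du$. The precision is
\[
\frac{r\pi+r(1-\pi)u}{r\pi+(1-\pi r)u}=\frac{r(1-\pi)}{1-\pi r}+\frac{r\pi(1-r)/(1-\pi r)}{r\pi+(1-\pi r)u}.
\]
Integrating against $(1-\pi)\,du$ uses $\int_0^1 \frac{du}{r\pi+(1-\pi r)u}=\frac{1}{1-\pi r}\ln\frac{1}{\pi r}$. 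The two terms give exactly $\frac{r(1-\pi)^2}{1-\pi r}+\frac{r\pi(1-\pi)(1-r)}{(1-\pi r)^2}\ln\frac{1}{\pi r}$, so $\mathrm{AUPRC}^*=\mathcal{A}(\pi,r)$. The endpoint values follow directly. At $\pi=1$ the last two terms vanish. At $\pi=0$ the log term vanishes because $\pi\ln(1/\pi)\to0$, leaving $r$; here $\mathcal{A}(0,r)$ is understood as this limit.

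For tightness, I take $P_{\mathrm{obs}}$ supported on a region where $P_{\mathrm{neg}}$ has no mass. The scorer places that region strictly above the support of $P_{\mathrm{neg}}$, with continuous scores inside each region, for example an affine shift of a continuous score. Then $F_{\mathrm{obs}}\equiv1$ for all $u>0$, so equality holds.

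For strict monotonicity in $\pi$, I will argue through the curves rather than by differentiating $\mathcal{A}$ directly. For $\pi<\pi'$ the extremal curve $\pi'+(1-\pi')u$ lies strictly above $\pi+(1-\pi)u$ for all $u\in[0,1)$. Hence precision at each recall is weakly larger for $\pi'$, and strictly larger on a recall interval of positive length, for instance recall in $(\pi,\pi']$, where it equals $1$ versus something below $1$. Integrating gives $\mathcal{A}(\pi,r)<\mathcal{A}(\pi',r)$.

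I expect the main obstacle to be making the pointwise ROC-to-PR dominance rigorous. The concerns are the jump of the extremal curve at $u=0$, where the PR curve is flat at precision $1$, the recall-parametrized integral when $\mathrm{TPR}_f$ is not strictly monotone, and the $\pi\to0$ boundary. Continuity of the score distributions removes ties, and I will handle the $u=0$ jump as a separate segment.
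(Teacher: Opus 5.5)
Your proposal is correct and follows essentially the same route as the paper. Like the paper, you use $R_{\mathrm{obs}}\le 1$ to get $\mathrm{FPR}\ge\max\{0,(s-\pi)/(1-\pi)\}$ at recall $s$, note that precision decreases in FPR at fixed recall, integrate the resulting envelope in closed form, and attain it with a disjoint-support score construction. The differences are cosmetic: you integrate in the FPR parametrization instead of substituting $u=s-\pi$, and you get strict monotonicity from envelope dominance (precision $1$ versus precision $<1$ on recall $(\pi,\pi']$) rather than from the sign of $\partial G_\pi(s)/\partial\pi$, which is equally valid.
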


We use this only as an evaluation diagnostic.
Forward $\pi$ grids calibrate the AUPRC scale at each benchmark prevalence, and grid crossings are not estimates of the true latent $\pi$.

\section{Method}
\label{sec:method}

PrefixGuard converts raw LLM-agent traces into online failure-warning monitors.
StepView maps heterogeneous raw steps into canonical records using an LLM-assisted offline adapter for a fixed schema.
The trainable backend combines an event abstraction layer with a prefix-warning monitor.
It maps StepView fields to a learned event alphabet and calibrated prefix risks, and the backend can be instantiated as a GRU, Transformer, or soft-FSM.
Hard learned symbols can also be compiled into PrefixGuard-DFA for finite-state audit diagnostics.

\subsection{StepView: LLM-Assisted Adapter Induction}
\label{sec:stepview}

Raw execution steps from different agent benchmarks arrive in heterogeneous formats.
A browser-agent step might carry a CSS selector in a structured action field, while a dialogue-agent step might carry a JSON tool call embedded inside a conversation turn.
Writing a format-specific parser by hand is labor-intensive and does not scale to new benchmarks without fresh engineering effort.

\textbf{Offline adapter induction.}
StepView replaces manual parser authoring with a one-time LLM-assisted adapter-induction step over a fixed output schema.
Given a sample pack drawn from training trajectories of a target benchmark, an LLM proposes a lightweight deterministic adapter, namely a field-extraction function that parses each raw step $c_t$ into a canonical StepView record consumed by the monitor:
\[
    \texttt{sv}(c_t) = \bigl(\,\texttt{metadata},\ \texttt{observation},\ \texttt{action},\ \texttt{tool},\ \texttt{args},\ \texttt{result},\ \texttt{status}\,\bigr).
\]
The induced adapter is fixed before monitor training and used by all downstream models.
Validation and test traces are converted by this fixed parser with no deployment-time LLM inference and no step-level annotation.
We review the generated adapter only for structural validity, without using downstream warning metrics to revise the field-extraction logic.
Appendix~\ref{app:stepview_protocol} gives the exact field mapping, fallback policy, and released adapter code.

\subsection{TF-IDF Step Encoder}
\label{sec:encoder}

We serialize each StepView record in a fixed field-tagged order, using blocks such as \texttt{METADATA}, \texttt{OBSERVATION}, \texttt{ACTION}, and \texttt{RESULT}, and treat the resulting string as one document for TF-IDF~\citep{salton1988term}.
The vectorizer is fit only on training-step strings and then frozen for validation, test, and deployment.
It upweights n-grams that are distinctive across the training corpus while downweighting common boilerplate.
We use unigrams and bigrams, retain the top $d{=}4096$ features by corpus frequency, and $\ell_2$-normalize the vector to obtain $\mathbf{e}_t \in \mathbb{R}^d$.

\subsection{Differentiable Event Abstraction Layer}
\label{sec:abstraction}

The TF-IDF embedding $\mathbf{e}_t$ captures lexical content but exposes no discrete structure suitable for automaton induction.
We introduce an event abstraction layer that maps each step embedding to one of $K$ latent symbols.

\textbf{Soft symbol assignment.}
A two-layer projection network with a GELU nonlinearity maps each step embedding to logits over $K$ symbols, from which a Gumbel-softmax~\citep{jang2017categorical} yields a differentiable soft assignment over the $K$-symbol event alphabet:
\[
    \ell_{t,k} = \mathbf{W}_2\,\mathrm{GELU}(\mathbf{W}_1\,\mathbf{e}_t), \qquad
    \boldsymbol{\alpha}_t = \mathrm{GumbelSoftmax}(\boldsymbol{\ell}_t / \tau_{\mathrm{g}}) \in \Delta^{K-1}.
\]
The soft assignment $\boldsymbol{\alpha}_t \in \mathbb{R}^K$ is passed directly to the prefix monitor, which applies its own projection or transition update.
Gradients flow back through the Gumbel-softmax into the projection network.
This approach to end-to-end discrete representation learning follows~\citet{baevski2020vq}.

\textbf{End-to-end alphabet induction.}
The projection weights $\mathbf{W}_1, \mathbf{W}_2$ are optimized jointly with the prefix monitor against $\mathcal{L}_{\mathrm{pred}}$.
The learned event alphabet is shaped by the warning objective rather than supplied as a fixed input.

\subsection{Prefix-Warning Monitor}
\label{sec:monitor}

The prefix-warning monitor $f_\theta$ consumes the sequence of soft symbol assignments $(\boldsymbol{\alpha}_1, \ldots, \boldsymbol{\alpha}_t)$ from the abstraction layer and emits a scalar risk score at each prefix length.
Any differentiable sequence model is compatible with this role.
We study four backends: a recurrent model (\emph{PrefixGuard-GRU}, our default online backend), a self-attention encoder (\emph{PrefixGuard-Transformer}), a soft finite-state surrogate trained end-to-end (\emph{PrefixGuard-FSM}), and a DFA extracted post-hoc from the hard symbols produced by the abstraction layer (\emph{PrefixGuard-DFA}, §\ref{sec:dfa}).
PrefixGuard-GRU is used as the default in cross-domain experiments.

\textbf{PrefixGuard-GRU.}
A linear projection with a single-layer GRU processes the symbol assignment:
\[
    \mathbf{h}_t = \mathrm{GRU}\!\bigl(\mathrm{GELU}(\mathbf{W}\,\boldsymbol{\alpha}_t),\ \mathbf{h}_{t-1}\bigr), \quad \mathbf{h}_0 = \mathbf{0}.
\]
A linear head maps each hidden state to a risk score $s_t = \sigma(\mathbf{w}^\top \mathbf{h}_t + b) \in [0, 1]$.

\textbf{PrefixGuard-Transformer.}
A causally-masked Transformer encoder processes the symbol sequence.
A linear head produces $s_t = \sigma(\mathbf{w}^\top \mathbf{h}_t + b)$.
It attends globally over the prefix at higher per-step compute than the GRU.

\textbf{PrefixGuard-FSM.}
The soft-FSM head is a differentiable finite-state surrogate.
It maintains a probability distribution $\mathbf{q}_t \in \Delta^{Q-1}$ over $Q$ abstract states and updates it using the current soft event assignment through a learned transition tensor $\mathbf{T} \in \mathbb{R}^{K \times Q \times Q}$:
\[
    \tilde{\mathbf{T}}_t = \sum_{k=1}^{K} \alpha_{t,k}\,\mathbf{T}_k, \quad
    \mathbf{q}_t = \frac{\mathbf{q}_{t-1}\,\tilde{\mathbf{T}}_t}{\|\mathbf{q}_{t-1}\,\tilde{\mathbf{T}}_t\|_1}, \quad
    \mathbf{q}_0 = \mathrm{softmax}(\boldsymbol{\theta}_0),
\]
with risk score $s_t = \sigma(\mathbf{w}^\top \mathbf{q}_t + b)$.
Here $\mathbf{q}_t$ is treated as a row vector.
The soft-mixed transition $\tilde{\mathbf{T}}_t$ blends all $K$ symbol-conditioned matrices weighted by the current Gumbel-softmax assignment, and the initial state $\mathbf{q}_0$ is parameterised by a learnable vector $\boldsymbol{\theta}_0$.
The soft-FSM backend keeps its hidden state as a categorical distribution over $Q$ states during neural deployment.
The fully symbolic DFA reported separately in §\ref{sec:dfa} is extracted from hard learned symbols and calibrated after training.

\textbf{Training objective.}
The loss combines binary cross-entropy over all prefix positions with a symbol-balance regularizer:
\[
    \begin{aligned}
        \mathcal{L}
         & = \lambda_{\mathrm{pred}}\mathcal{L}_{\mathrm{pred}}
        + \lambda_{\mathrm{balance}}\mathcal{L}_{\mathrm{balance}}, \quad
        \mathcal{L}_{\mathrm{pred}} = -T^{-1}\sum\nolimits_{t=1}^{T}\ell_t, \\
        \ell_t
         & = p_t\log s_t+(1-p_t)\log(1-s_t), \quad
        \mathcal{L}_{\mathrm{balance}}
        = \mathbb{E}_t[\mathcal{H}(\boldsymbol{\alpha}_t)]
        -\beta\,\mathcal{H}(\mathbb{E}_t[\boldsymbol{\alpha}_t]).
    \end{aligned}
\]
Here $\mathcal{H}(\cdot)$ is Shannon entropy.
Minimizing per-step entropy sharpens each assignment toward a single symbol, while maximizing marginal entropy prevents symbol collapse.
The full training procedure is given in Algorithm~\ref{alg:train} (Appendix~\ref{app:implementation}).

\textbf{Deployment.}
For differentiable backends, each new raw step is converted by StepView into a canonical record, encoded by the TF-IDF encoder, assigned a soft symbol representation by the abstraction layer, and scored by the selected backend.
For PrefixGuard-DFA, the deployed monitor instead hard-assigns a symbol and follows the extracted DFA transition function described next.

\subsection{Extracted DFA Monitor}
\label{sec:dfa}

To probe how far the learned alphabet can be compressed into exact symbolic state, the hard symbols $z_t = \arg\max_k\,\alpha_{t,k}$ produced by the abstraction layer can be used to extract a finite automaton from training traces.

\textbf{DFA extraction.}
After training, we symbolize all training trajectories using hard symbols and fit an RPNI-style automaton~\citep{oncina1992rpni} over the resulting symbol sequences.
Each DFA state is assigned a calibrated risk score from held-out calibration trajectories.
When the resulting automaton is used as a symbolic monitor, it follows the DFA transition function after each step and raises alerts when the current state risk exceeds a threshold.
This extraction is reported as a finite-state audit diagnostic.
We do not assume that a single monolithic DFA is equally suitable for all benchmarks.

%% file: content/experiments.tex
\section{Experiments}
\label{sec:experiments}

The experiments follow the four questions from Section~\ref{sec:intro}.
RQ1 tests whether observed prefixes contain warning signal without deployment-time LLM judging.
RQ2 asks whether StepView exposes signal beyond the Raw-text control.
RQ3 measures how far finite-state compression preserves warning signal and auditability.
RQ4 moves from ranking to deployment.
It separates AUPRC prevalence effects from visible evidence and early low-FAR alerts.
Table~\ref{tab:main_results} is the main evidence table.
Supplementary diagnostics are in Appendix~\ref{app:datasets}--\ref{app:ablations}.

\subsection{Experimental Setup}
\label{sec:setup}

\textbf{Data and labels.}
We evaluate WebArena~\citep{zhou2024webarena} browser navigation, $\tau^2$-Bench~\citep{barres2025tau2bench} tool dialogue, SkillsBench~\citep{li2026skillsbench} coding, and TerminalBench~\citep{merrill2026terminalbench} CLI agents (Table~\ref{tab:dataset_profile}).
All methods use fixed train/calibration/validation/test splits.
Calibration selects thresholds.
Prefix labels follow Eq.~\ref{eq:labeling} with fixed $H{=}3$, while Appendix~\ref{app:horizon} reports a validation-only $H\in\{1,3,5\}$ scan.
\input{tables/dataset_profile.tex}

\textbf{Primary metric.}
We use \emph{score-based AUPRC} (average precision on continuous risk scores), whose random baseline is the positive-prefix rate $r$.
Auxiliary AUROC, calibration, and operating-point diagnostics are reported in Appendix~\ref{app:main_table_aux_metrics}.
RQ4 adds trajectory-level first-alert diagnostics for deployable alarm burden.
Calibration details are in Appendix~\ref{app:gru_calibration}.
AUPRC's prevalence sensitivity motivates two readings. Table~\ref{tab:main_results} reports within-benchmark ranking evidence, and RQ4 uses the observed $r$ values and the observability ceiling to compare AUPRC scale across benchmarks.

\textbf{Shared baselines.}
Main comparisons cover zero-shot LLM judges, an outcome-oriented PPM activity-LSTM baseline, Raw-text controls with matched splits and architectures where available, and PrefixGuard monitor backends.
Appendix diagnostics add auxiliary PPM metrics, non-sequential probes, and non-recurrent per-step scoring.

\input{tables/main_results.tex}

\subsection{RQ1: Recoverable Prefix-Warning Signal}
\label{sec:rq1}

RQ1 asks \textit{whether observed prefixes expose recoverable failure-warning signal}.
\rqfinding{Finding 1}{Observed prefixes contain warning signal, and PrefixGuard converts it into online monitor state and auditable symbols.}
Table~\ref{tab:main_results} evaluates this question by contrasting LLM-as-judge with learned PrefixGuard monitors under the same $H{=}3$ warning labels and held-out splits.
The trend is consistent across trace families. The best zero-shot judge reaches 0.450 AUPRC on WebArena, remains below 0.40 on $\tau^2$-Bench, and falls near 0.10 on SkillsBench and TerminalBench.
The strongest PrefixGuard monitor reaches 0.900/0.710/0.533/0.557 on WebArena, $\tau^2$-Bench, SkillsBench, and TerminalBench.
This gap is consistent with prefix signal being easier to learn from labeled prefixes than to recover from a zero-shot judge prompt.
Appendix Table~\ref{tab:supervised_prefix_controls} adds supervised prefix-signal controls, while Table~\ref{tab:ppm_lstm_controls} reports auxiliary metrics for the three-seed PPM activity-LSTM baseline.
Appendix~\ref{app:confound_controls} reports position/task-prior and future-length label-geometry controls.
RQ2 then asks how heterogeneous raw traces should expose that signal to the monitor.

\subsection{RQ2: Typed Evidence Beyond Raw Serialization}
\label{sec:rq2}

RQ2 asks \textit{whether StepView exposes failure-relevant evidence beyond the Raw-text control}.
\rqfinding{Finding 2}{StepView exposes typed post-action and state evidence, with the dominant field varying across trace families.}
We isolate representation by keeping the monitor model, split, horizon, and metric fixed while replacing only the input view, Raw-text serialization versus StepView fields.
Table~\ref{tab:main_results} shows a consistent overall advantage for PrefixGuard over the Raw-text control.
Using the strongest backend available within each representation, PrefixGuard exceeds Raw-text by $+0.029/+0.113/+0.218/+0.187$ AUPRC on WebArena, $\tau^2$-Bench, SkillsBench, and TerminalBench, respectively, for an average gain of $+0.137$ AUPRC.
Thus the main-table results support the overall conclusion that StepView improves prefix-risk ranking beyond raw serialization across all four benchmarks.
Field drops in Table~\ref{tab:rq2_stepview_fields} show that the useful channel is benchmark-specific.
WebArena is primarily \texttt{result}-sensitive ($-0.204$), TerminalBench is \texttt{status}-sensitive ($-0.106$), and observation-only inputs remove substantial signal on $\tau^2$-Bench and TerminalBench ($-0.266/-0.270$).
Appendix Tables~\ref{tab:ablation} and~\ref{tab:continuous_stepview_sequence_controls} provide the full field ablations and continuous StepView controls.
RQ3 then tests how much of the exposed signal survives increasingly state-structured and exact finite-state monitor forms.

\input{tables/rq2_rq3_compact.tex}

\subsection{RQ3: Signal Preservation Under Finite-State Compression}
\label{sec:rq3}

RQ3 asks \textit{how much warning signal survives increasingly constrained monitor forms, from neural sequence models to exact DFA extraction}.
\rqfinding{Finding 3}{Finite-state compression changes both ranking quality and audit surface. Neural monitors give the strongest ranking, while compact DFAs identify regimes where exact audit remains practical.}
We compare three levels of monitor structure on the same StepView signal, direct neural sequence monitors, a differentiable soft-FSM, and a post-hoc DFA extracted from learned hard symbols.
Table~\ref{tab:main_results} shows a consistent compression trend.
The strongest neural monitor reaches 0.900/0.710/0.533/0.557 AUPRC on WebArena, $\tau^2$-Bench, SkillsBench, and TerminalBench.
The soft-FSM is lower but remains competitive on WebArena and $\tau^2$-Bench.
The exact DFA preserves less ranking signal, especially on $\tau^2$-Bench and TerminalBench.
Learned symbolic state can support online monitoring, but exact DFA extraction adds a stronger compactness constraint.
Table~\ref{tab:rq3_dfa_audit_compact} shows the single-artifact DFA audit sizes.
WebArena and $\tau^2$-Bench yield compact DFAs with 29 and 20 states, while SkillsBench and TerminalBench expand to 151 and 187 states.
$\tau^2$-Bench is compact but highly concentrated in its top states.
The larger SkillsBench and TerminalBench automata preserve high trusted-prefix coverage with a broader audit surface.
Appendix Table~\ref{tab:dfa_posthoc_audit} gives the full DFA audit, Appendix~\ref{app:dfa_state_alignment} adds qualitative state-alignment examples, and Appendix~\ref{app:transformer_seeds} reports the Transformer seed-level comparison.
RQ4 then asks how ranking, observability, and operating-point alarms relate under deployment constraints.

\subsection{RQ4: From Prefix Ranking to Deployment Utility}
\label{sec:rq4}

RQ4 asks \textit{how prefix-ranking quality, visible evidence, and false-alarm-constrained alerts relate}.
\rqfinding{Finding 4a}{AUPRC is prevalence-conditioned. Cross-benchmark gaps can reflect label prevalence, not just monitor quality. Ceiling and MPE separate this scale effect from visible prefix evidence.}

\input{figures/conditional_auprc_ceiling_curve.tex}

\textbf{Prevalence-conditioned AUPRC scale.}
Raw AUPRC is not on a common cross-benchmark scale.
Under the shared $H{=}3$ label, WebArena's short trajectories make near-end failed prefixes a large fraction of all test prefixes ($r=0.363$), while $\tau^2$-Bench, SkillsBench, and TerminalBench have much lower positive-prefix prevalence ($r\approx0.07$--$0.09$).
Figure~\ref{fig:conditional_auprc_ceiling_curve} calibrates this effect with the AUPRC envelope from Proposition~\ref{prop:auprc_ceiling}. For the same observable fraction $\pi$, a larger $r$ gives a higher random baseline and a higher achievable AUPRC scale.
Thus WebArena's 0.900 AUPRC reflects strong within-benchmark ranking on a prevalence-favorable scale, not stronger intervention utility.
The star markers invert the bound and show the minimum $\pi$ required to attain the observed PG-GRU AUPRC at each benchmark's $r$.
Numerically, the PG-GRU points correspond to $\pi_{\mathrm{req}}=0.776$ for WebArena, $0.621$ for $\tau^2$-Bench, $0.430$ for SkillsBench, and $0.478$ for TerminalBench.

\textbf{Latent observability diagnostics.}
Latent observability is a property of the prefix distribution, not of the learned monitor alone.
The filled backend markers provide a finite-sample mixture-proportion estimation (MPE) diagnostic $\hat{\pi}_{\mathrm{MPE}}$ for whether failed prefixes are visibly distinguishable from negative references under an independent TF-IDF probe.
WebArena uses an all-prefix audit because its trajectories are short, while the longer benchmarks use a matched non-terminal near-end audit that drops terminal prefixes and compares failed/successful prefixes in the same $H{=}3$ window.
Reading stars and filled markers together separates what the achieved PG-GRU AUPRC implies under the ceiling from how much prefix separability the independent probe sees.
$\tau^2$-Bench and TerminalBench have high matched-prefix separability, so their lower raw AUPRCs should not be read as evidence that failures are hidden; rather, their lower $r$ and imperfect monitor or threshold recovery keep the observed AUPRC and alert utility below what a fully exploited observable signal could support.
Appendix~\ref{app:mpe_audit_protocol} gives the probe protocol and the MPE sensitivity caveats.

\begin{figure}[!ht]
  \centering
  \includegraphics[width=0.88\linewidth]{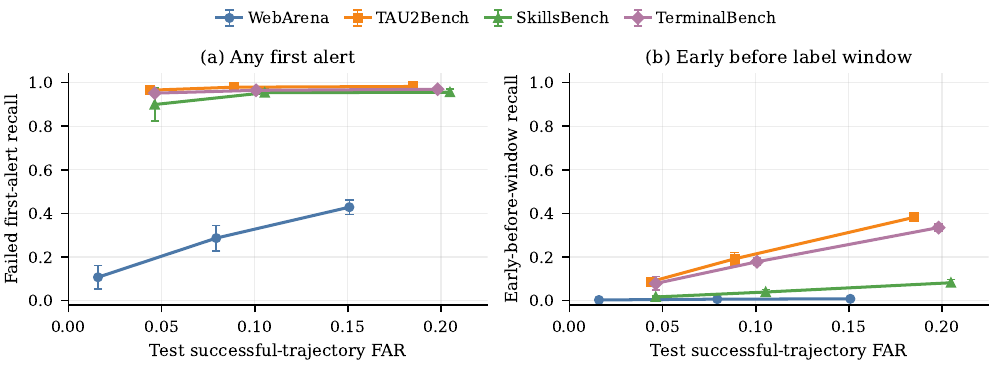}
  \caption{First-alert diagnostics under FAR constraints for PrefixGuard-GRU ($H{=}3$). WebArena is rankable but not alarm-separable. $\tau^2$ and Terminal retain high failed-trajectory recall.}
  \label{fig:far_constrained_warning_utility}
\end{figure}
\vspace{-1em}

\rqfinding{Finding 4b}{Low-FAR actionability differs from ranking. WebArena ranks well but is not separable as an alarm, while $\tau^2$-Bench and TerminalBench retain stronger recall and earlier alerts.}

\input{tables/alert_lead_time_core.tex}

\textbf{Alarm actionability.}
Actionability adds an operating-point requirement. The score threshold must control false alarms while firing early enough to support intervention.
At a $10\%$ calibration FAR cap (Figure~\ref{fig:far_constrained_warning_utility}, Table~\ref{tab:lead_time_core}), WebArena remains terminal-window triage despite 0.900 AUPRC, $\tau^2$-Bench is the clearest intervention setting, TerminalBench keeps useful lead time, and SkillsBench catches failures with high precision but late timing.
Thus the phenomenon is a three-way mismatch. AUPRC measures \emph{risk ranking}, MPE estimates \emph{visible prefix evidence}, and deployment requires \emph{low-FAR early alerts} before the terminal $H$-step window.
Appendix~\ref{app:lead_time} provides the full first-alert FAR sweep.

%% file: tables/dataset_profile.tex
\begin{table}[H]
  \centering
  \small
  \caption{Dataset profile and prefix-label imbalance.
    Success is trajectory-level success; $r$ is the positive-prefix rate on the test split under $H{=}3$, which is also the random-baseline AUPRC.}
  \label{tab:dataset_profile}
  \begin{tabularx}{\linewidth}{llrrrrX}
    \toprule
    Benchmark     & Agent setting & Traj.    & Succ.  & Avg steps & $r$    & Dominant observable signal  \\
    \midrule
    WebArena      & Browser nav.  & 4{,}427  & 7.9\%  & 8.6       & 36.3\% & Step-local task failure     \\
    $\tau^2$-Bench     & Tool dialogue & 10{,}832 & 66.3\% & 15.5      & 8.9\%  & Env. assertion / DB comm.   \\
    SkillsBench   & Coding agent  & 10{,}951 & 27.5\% & 32.4      & 9.2\%  & Verifier-side failure heavy \\
    TerminalBench & CLI agent     & 34{,}397 & 32.2\% & 36.9      & 7.0\%  & Coarse reward signal        \\
    \bottomrule
  \end{tabularx}
\end{table}

%% file: tables/main_results.tex
\begin{table}[!ht]
       \centering
       \caption{Main AUPRC results compare zero-shot LLM judges, a PPM activity-LSTM control, a Raw-text control without using StepView, and PrefixGuard monitor backends.
              Values are mean\,$\pm\,\sigma$ over 3 seeds unless marked otherwise; \textbf{bold} marks the best PrefixGuard backend in each column.}
       \label{tab:main_results}
       \resizebox{\linewidth}{!}{%
              \begin{tabular}{llcccc}
                     \toprule
                     \multicolumn{2}{c}{Configuration} & \multicolumn{4}{c}{AUPRC}                                                                                       \\
                     \cmidrule(lr){1-2}\cmidrule(lr){3-6}
                     Input view                        & Head / scorer              & WebArena                   & $\tau^2$-Bench    & SkillsBench       & TerminalBench \\
                     \midrule
                     \multicolumn{6}{l}{\textit{LLM baselines}}                                                                                                          \\
                     Prompt                            & GPT-5.4-mini
                                                       & 0.407                      & 0.302                      & 0.101             & 0.127                             \\
                                                       & DeepSeek-V4-Pro
                                                       & 0.450                      & 0.396                      & 0.080             & 0.107                             \\
                     \midrule
                     \multicolumn{6}{l}{\textit{PPM baseline}}                                                                                                           \\
                     StepView activity                 & PPM LSTM
                                                       & $0.382 \pm 0.004$          & $0.231 \pm 0.003$          & $0.089 \pm 0.001$ & $0.093 \pm 0.000$                 \\
                     \midrule
                     \multicolumn{6}{l}{\textit{Raw-text control}}                                                                                                       \\
                     Raw text                          & DFA
                                                       & $0.745 \pm 0.034$          & $0.222 \pm 0.017$          & $0.147 \pm 0.005$ & $0.137 \pm 0.008$                 \\
                                                       & FSM
                                                       & $0.639 \pm 0.051$          & $0.466 \pm 0.030$          & $0.260 \pm 0.014$ & $0.272 \pm 0.010$                 \\
                                                       & Transformer
                                                       & $0.854 \pm 0.007$          & $0.597 \pm 0.002$          & $0.315 \pm 0.016$ & $0.363 \pm 0.006$                 \\
                                                       & GRU
                                                       & $0.871 \pm 0.004$          & $0.554 \pm 0.006$          & $0.315 \pm 0.006$ & $0.370 \pm 0.001$                 \\
                     \midrule
                     \multicolumn{6}{l}{\textit{PrefixGuard monitors}}                                                                                                   \\
                     StepView                          & DFA
                                                       & $0.792 \pm 0.015$          & $0.316 \pm 0.055$          & $0.190 \pm 0.021$ & $0.184 \pm 0.029$                 \\
                                                       & FSM
                                                       & $0.837 \pm 0.017$          & $0.614 \pm 0.031$          & $0.273 \pm 0.035$ & $0.447 \pm 0.013$                 \\
                                                       & Transformer
                                                       & $0.892 \pm 0.006$          & $\mathbf{0.710 \pm 0.014}$ & $0.478 \pm 0.028$ & $0.555 \pm 0.006$                 \\
                                                       & GRU
                                                       & $\mathbf{0.900 \pm 0.015}$
                                                       & $0.696 \pm 0.004$
                                                       & $\mathbf{0.533 \pm 0.020}$
                                                       & $\mathbf{0.557 \pm 0.005}$                                                                                      \\
                     \midrule
                     \multicolumn{2}{l}{PG-GRU gain vs.\ Raw-text GRU}
                                                       & $+0.029$                   & $+0.142$                   & $+0.218$          & $+0.187$                          \\
                     \bottomrule
              \end{tabular}%
       }
       \vspace{0.5em}
       \parbox{\linewidth}{\footnotesize\raggedright
              PG denotes PrefixGuard. LLM baselines use zero-shot full-prefix prompts with samples $N{=}200$;
              Raw-text control and PG rows use the same $H{=}3$ labels and splits; Raw-text control changes only the input serialization.
              Raw-text DFA and PG-DFA are induced from their corresponding GRU artifacts.
       }
       \vspace{-1.0em}
\end{table}

%% file: tables/rq2_rq3_compact.tex
\begin{table}[!ht]
  \vspace{-1em}
  \centering
  \begin{minipage}[t]{0.62\linewidth}
    \centering
    \caption{StepView field-drop effects.\\
      AP denotes AUPRC.}
    \label{tab:rq2_stepview_fields}
    \scriptsize
    \setlength{\tabcolsep}{2pt}
    \resizebox{\linewidth}{!}{%
      \begin{tabular}[b]{lcccccc}
        \toprule
        Benchmark      & All AP & $\Delta$ tool & $\Delta$ stat. & $\Delta$ args & $\Delta$ res. & $\Delta$ obs. \\
        \midrule
        WebArena       & 0.883  & -0.005        & -0.006         & +0.022        & -0.204        & -0.049        \\
        $\tau^2$-Bench & 0.702  & +0.009        & -0.007         & +0.006        & +0.004        & -0.266        \\
        SkillsBench    & 0.549  & +0.003        & -0.015         & +0.004        & -0.008        & -0.026        \\
        TerminalBench  & 0.550  & +0.027        & -0.106         & +0.024        & +0.032        & -0.270        \\
        \bottomrule
      \end{tabular}}
  \end{minipage}\hfill
  \begin{minipage}[t]{0.36\linewidth}
    \centering
    \caption{DFA audit compactness.\\
      States denote posthoc counts.}
    \label{tab:rq3_dfa_audit_compact}
    \footnotesize
    \setlength{\tabcolsep}{2.2pt}
    \vspace{1.1em}
    \begin{tabular}[b]{@{}lrrrr@{}}
      \toprule
      Bench.   & States & Trust. \% & Warn. & Top-5 \\
      \midrule
      WebArena & 29     & 99.9      & 6     & 0.551 \\
      $\tau^2$ & 20     & 99.3      & 3     & 0.920 \\
      Skills   & 151    & 98.5      & 27    & 0.596 \\
      Terminal & 187    & 99.9      & 6     & 0.620 \\
      \bottomrule
    \end{tabular}
  \end{minipage}
\end{table}

%% file: figures/conditional_auprc_ceiling_curve.tex
\begin{figure}[!h]
  \centering
  \IfFileExists{figures/conditional_auprc_ceiling_curve.pdf}{%
    \includegraphics[width=0.8\linewidth]{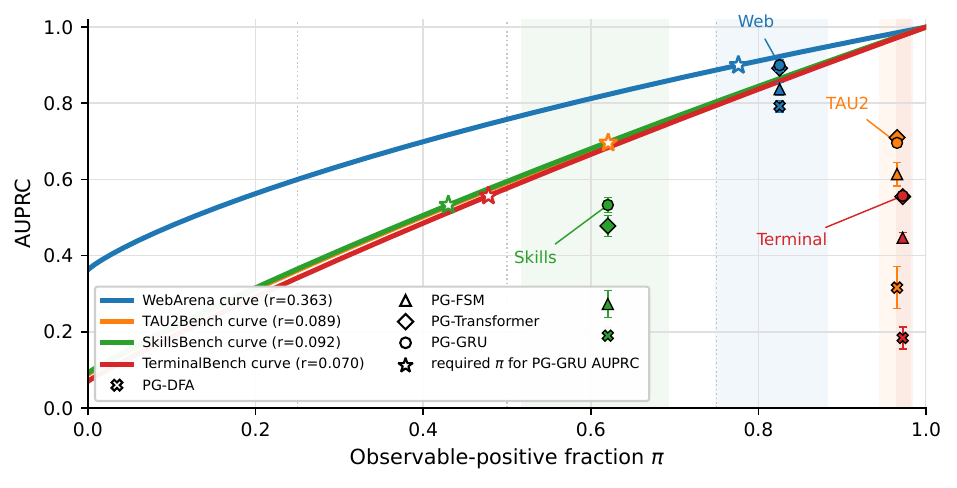}%
  }{%
    \includegraphics[width=0.8\linewidth]{conditional_auprc_ceiling_curve.pdf}%
  }
  \caption{Forward AUPRC-ceiling calibration using Proposition~\ref{prop:auprc_ceiling}.
    Curves show $\mathcal{A}(\pi,r)$; filled backend markers use independent MPE estimates (WebArena all-prefix, others matched non-terminal; Appendix~\ref{app:mpe_audit_protocol}).
    Stars mark the minimum $\pi$ required to attain PG-GRU AUPRC; shaded bands show MPE bootstrap CIs; vertical bars show AUPRC seed variation.}
  \label{fig:conditional_auprc_ceiling_curve}
\end{figure}

%% file: tables/alert_lead_time_core.tex
\begin{wraptable}[10]{r}{0.43\linewidth}
  \vspace{-0.6em}
  \centering
  \caption{First-alert utility at $H{=}3$ and $10\%$ FAR.
    Fail/Early are any/pre-window failed-trajectory recall; Lead is remaining trajectory fraction.}
  \label{tab:lead_time_core}
  \small
  \setlength{\tabcolsep}{2pt}
  \renewcommand{\arraystretch}{0.9}
  \begin{tabular}{@{}lrrrr@{}}
    \toprule
    Bench.   & FAR   & Fail  & Early & Lead  \\
    \midrule
    WebArena & 0.079 & 0.287 & 0.007 & 0.026 \\
    $\tau^2$ & 0.089 & 0.979 & 0.192 & 0.106 \\
    Skills   & 0.105 & 0.954 & 0.039 & 0.017 \\
    Terminal & 0.101 & 0.965 & 0.178 & 0.215 \\
    \bottomrule
  \end{tabular}
  \vspace{-1em}
\end{wraptable}

%% file: content/limitations_conclusion.tex
\section{Limitations and Conclusion}
\label{sec:limitations}
\label{sec:conclusion}

\textbf{Limitations.}
PrefixGuard synthesizes monitors from execution traces rather than complete intervention policies. Warnings still require visible prefix evidence, low-FAR separability, and a deployment action after an alert.
The fixed horizon and FAR caps are evaluation controls, and MPE coordinates are probe- and protocol-specific diagnostics rather than certified population $\pi$ estimates.
The finite-state path is audit-friendly only in compact regimes; routed DFA extraction under deployment-visible context remains a future direction, not a locked-test deployment claim (Appendix~\ref{app:dfa_posthoc_audit}).

\textbf{Conclusion.}
PrefixGuard maps heterogeneous agent traces into typed StepView prefixes, learns failure-aligned event symbols, and produces online risk scores without deployment-time LLM judging.
Across four benchmarks, typed post-action evidence improves warning over raw controls, while observability and low-FAR diagnostics reveal when high ranking supports intervention rather than terminal-window triage.

%% file: content/appendix.tex
\appendix
\raggedbottom

\input{content/related_work_extended.tex}

\section{Implementation Details}
\label{app:implementation}

\input{content/algorithm.tex}

\subsection{Artifact Availability}
\label{app:artifact_availability}

The anonymous code artifact is available at
\url{https://anonymous.4open.science/status/PrefixGuard-CF8A}.
It contains the code and release materials for reproducing the training, evaluation, StepView induction, and DFA-audit procedures described in this appendix.

\subsection{Compute Resources}
\label{app:compute_resources}

All neural monitor experiments were run on a local workstation with two AMD EPYC 7452 central processing units (CPUs), 125~GiB random-access memory (RAM), and three NVIDIA A100-PCIE-40GB graphics processing units (GPUs).
Each training or locked-test evaluation job used a single A100 via \texttt{CUDA\_VISIBLE\_DEVICES}; no reported experiment used distributed or multi-GPU training.
Final paper-facing runs used Python~3.10.18 and PyTorch~2.11.0+\texttt{cu130}.
Typical single-run training time ranged from about 30--45 minutes for $\tau^2$-Bench/SkillsBench to about 2--3.5 hours for WebArena/TerminalBench, with locked-test evaluation usually under 15 minutes.
CPU-only controls, DFA diagnostics, and large language model (LLM) application programming interface (API) calls ran on the same host CPU; preliminary sweeps and appendix ablations required additional compute beyond the final reported runs.

\subsection{LLM API Checkpoints}
\label{app:llm_api_checkpoints}

Table~\ref{tab:llm_api_checkpoints} lists the LLM API checkpoints used in the paper.
PrefixGuard itself does not call an LLM at deployment time; LLM calls are used only for offline StepView induction/auditing and for zero-shot LLM-as-judge baselines.

\begin{table}[!ht]
  \centering
  \caption{Large language model (LLM) application programming interface (API) checkpoints used for offline adapter induction and LLM baseline evaluation.}
  \label{tab:llm_api_checkpoints}
  \small
  \begin{tabularx}{\linewidth}{lllX}
    \toprule
    Role                     & Provider & Checkpoint               & Protocol                                                                                                       \\
    \midrule
    LLM induction checkpoint & OpenAI   & \texttt{gpt-5.4-nano}    & Offline adapter induction for all benchmarks, with \texttt{temperature=0.0} and strict JSON schema validation. \\
    LLM judge baseline       & OpenAI   & \texttt{gpt-5.4-mini}    & Zero-shot full-prefix judge, $N{=}200$ prefixes per benchmark, JSON probability output.                        \\
    LLM judge baseline       & DeepSeek & \texttt{deepseek-v4-pro} & Zero-shot full-prefix judge, $N{=}200$ prefixes per benchmark, 1M context window with thinking disabled.       \\
    \bottomrule
  \end{tabularx}
\end{table}

\subsection{StepView Canonicalization}
\label{app:stepview_protocol}

StepView field types are inferred from training trajectories using a one-time offline LLM-assisted schema induction process.
The LLM inspects a small sample of raw training traces and proposes benchmark-specific structural cues for deterministic field extraction.
The resulting adapter maps each step into the canonical fields \texttt{metadata}, \texttt{observation}, \texttt{action}, \texttt{tool}, \texttt{args}, \texttt{result}, and \texttt{status}; after this induction step, all train/test conversion is performed by fixed code with no LLM inference.
In code, these are stored as \texttt{StepView.metadata\_lines}, \texttt{observation\_lines}, \texttt{action\_text}, \texttt{tool\_name}, \texttt{tool\_args\_text}, \texttt{result\_text}, and \texttt{status}.
The induction prompt below uses adapter target names such as \texttt{metadata}, \texttt{action}, and \texttt{tool}; these serialize into the monitor-facing \texttt{METADATA}, \texttt{OBSERVATION}, \texttt{ACTION}, and \texttt{RESULT} blocks used in the main experiments.

\textbf{Induction protocol.}
To prevent data contamination, the LLM induction step uses only trajectories from the \emph{training split}.
Specifically, the adapter-proposal script scans the first 64 training trajectories, constructs a deterministic 12-step raw sample pack for the LLM induction prompt, and obtains a deterministic field-extraction adapter.
No validation or test trajectories are used during induction.
The generated adapter is reviewed for structural correctness (e.g., ensuring all expected fields are populated) but is not manually tuned to improve AUPRC; the induction prompt and generated adapter code are released with the codebase.
\textbf{Human effort estimate.}
The structural-correctness review for each new benchmark took under 30 minutes (one-time), consisting of spot-checking that required fields (\texttt{tool\_name}, \texttt{status}) were populated and that the fallback rate was zero; no iterative re-prompting or manual field-extraction code was written.
No human re-authoring of field-extraction logic was required; the induction prompt produces a working adapter from the first LLM call on all four benchmarks in this work.
Steps with unknown tool names at test time fall back to a monolithic text encoding of the full step string.
The adapter is deterministic once fixed; the offline induction design and prompt template are specified separately in Appendix~\ref{app:stepview_induction_repro}.

\textbf{Transfer.}
Transfer to a new benchmark ($\tau^2$-Bench) reuses the same induction procedure on the target training split, producing a new set of field patterns without modifying the model architecture or manually adjusting any parsing rules.

\textbf{Parse coverage.}
Table~\ref{tab:stepview_parse_coverage} reports per-field fill rates across all four benchmarks.
All four benchmarks achieve $100\%$ coverage for \texttt{tool\_name} and \texttt{status}, with $0\%$ fallback rate across all $1.8\text{M}$ total steps.
\texttt{result} fill rates vary (WebArena $87\%$) because some navigation actions produce no structured return value, which is expected behavior rather than a parsing failure.
\texttt{args} fill rates vary across benchmarks because $\tau^2$-Bench dialogue turns often carry no structured arguments.
No human re-editing of field-extraction logic was required for any benchmark; the induction prompt and generated adapter code are released.

\begin{table}[H]
  \centering
  \caption{StepView parse coverage: per-field fill rates across all benchmarks ($0\%$ fallback on all four).}
  \label{tab:stepview_parse_coverage}
  \small
  \begin{tabular}{lrrrrrr}
    \toprule
    Benchmark      & Total steps & \texttt{tool\_name} & \texttt{result} & \texttt{args} & \texttt{status} & Fallback \\
    \midrule
    WebArena       & 18,731      & 100\%               & 87\%            & 100\%         & 100\%           & 0\%      \\
    $\tau^2$-Bench & 167,504     & 100\%               & 100\%           & 41\%          & 100\%           & 0\%      \\
    SkillsBench    & 355,095     & 100\%               & 100\%           & 76\%          & 100\%           & 0\%      \\
    TerminalBench  & 1,269,101   & 100\%               & 100\%           & 85\%          & 100\%           & 0\%      \\
    \bottomrule
  \end{tabular}
\end{table}

\subsection{StepView Adapter-Induction Design and Prompt}
\label{app:stepview_induction_repro}

Because StepView uses an offline LLM call to propose a dataset-specific adapter, we specify the \emph{induction step} as a fixed data-access and prompting protocol.
This protocol is separate from the downstream monitor experiments: it does not train a warning model, change a split, change labels, or alter any metric.

\textbf{Design.}
For each benchmark, the adapter-proposal script constructs one deterministic sample pack of 12 raw steps from the first 64 training trajectories considered by the script.
Candidate steps are bucketed as initial, mid-trajectory, tool/action, or anomalous, using fixed quotas of 4/4/2/2; within each bucket, examples are ordered by \texttt{sha256(trajectory\_id:step\_index)}, so the sample pack is fixed for a given dataset artifact.
The proposal call uses \texttt{gpt-5.4-nano}, \texttt{temperature=0.0}, the prompt template below, and a strict JSON schema.
The accepted output must pass schema validation and executor-support validation before it is versioned and used by downstream conversion.

\textbf{Prompt.}
The induction prompt fixes the adapter target fields, restricts the model to repository-supported selectors, and requires strict JSON output.
Listing~\ref{lst:stepview_induction_prompt} shows the template with the field-name mapping made explicit; the placeholders are filled with the allowed selector enums and the deterministic 12-step sample pack.

\begin{lstlisting}[style=promptstyle,
  caption={StepView Adapter-Induction Prompt Template},
  label={lst:stepview_induction_prompt}]
You are inferring a dataset-specific adapter spec for fixed StepView adapter targets.

The adapter targets are fixed and serialize to the monitor-facing StepView representation as follows:
- metadata -> StepView.metadata_lines -> METADATA=[...]
- observation -> StepView.observation_lines -> OBSERVATION=[...]
- action -> StepView.action_text -> ACTION=[action=...]
- tool -> StepView.tool_name -> ACTION=[tool=...]
- args -> StepView.tool_args_text -> ACTION=[args=...]
- result -> StepView.result_text -> RESULT=[text=...]
- status -> StepView.status -> RESULT=[status=...]

Your task is NOT to invent new adapter target fields.
Your task is to infer how this dataset's raw steps should map into these fixed targets.

Requirements:
1. Be conservative.
2. Prefer exact extraction over semantic rewriting.
3. If a field is unavailable, mark it as unknown or derive:none.
4. Distinguish:
   - metadata: slow-changing task/domain identifiers
   - observation: what is visible before the action
   - action: the agent decision or emitted action text
   - tool: the invoked tool/function/action primitive
   - args: structured parameters for the tool/action
   - result: environment/tool/user feedback after the action
   - status: local execution state, native if possible, otherwise weakly derivable
5. Choose one observation unit:
   - line
   - dialogue_turn
   - log_block
   - kv_block
   - none
6. Choose one reducer kind:
   - lexical_lines
   - dialogue_turns
   - log_blocks
   - kv_blocks
   - none
7. Only normalize tools when the aliases are obviously operationally identical.
8. Output JSON only, following the adapter-spec JSON schema exactly.

Allowed selectors for this repository's phase-1 executor:
- metadata_sources items: <allowed_metadata_selectors>
- observation_source: <allowed_observation_selectors>
- action_source: <allowed_action_selectors>
- tool_source: <allowed_tool_selectors>
- args_source: <allowed_args_selectors>
- result_source: <allowed_result_selectors>
- status_source: <allowed_status_selectors>

Target dataset: <dataset_name>

Below is a representative sample pack. Infer one adapter spec that should work for this dataset family.

<sample_pack_json>
\end{lstlisting}

\subsection{TF-IDF Encoding}

All StepView fields for a step are concatenated into one canonical text string.
A single TF-IDF vectorizer is fit on these step strings from the training split and then frozen for validation and test encoding.
The TF-IDF vocabulary is capped at $d = 4096$ features.
We treat this as a fixed representation budget rather than a benchmark-tuned hyperparameter: the same cap is used for all GRU, Transformer, and FSM runs, as well as the extracted-DFA audits.
The cap is large enough to retain benchmark-specific unigrams and bigrams after field-tagged serialization, but keeps the encoder compact and prevents method comparisons from receiving different lexical feature budgets.

\subsection{Trainable Monitor Hyperparameters}

The event symbolizer is a two-layer MLP with hidden dimension 128 that maps frozen TF-IDF step embeddings to $K$ soft event-symbol probabilities.
For the direct GRU head, these probabilities are projected to $Q_{\max}$ dimensions and consumed by a single-layer GRU with hidden size $Q_{\max}$, followed by a linear sigmoid scoring head.
The soft-FSM head uses the same symbolizer and the benchmark-specific $K$ and $Q_{\max}$ budgets listed below.

Training: AdamW optimizer, learning rate $10^{-3}$, weight decay $10^{-4}$, batch size 64 trajectories, and 24 epochs for all paper-facing trainable monitor runs.
Validation is run after every epoch, and the checkpoint with the best validation AUPRC under the run's selection metric is used for locked-test evaluation.
Prefix labels use $H = 3$.
Maximum sequence length: 64 steps; trajectories longer than 64 steps are truncated to the most recent 64 steps.

Loss weights follow the training objective in Section~\ref{sec:monitor}: $\lambda_{\mathrm{pred}} = 1.0$ and $\lambda_{\mathrm{balance}} = 0.1$ for all paper-facing runs.
The balance term is therefore a weak, fixed anti-collapse constraint on the learned event alphabet, not a benchmark-specific tuning knob.
Its role is to sharpen per-step symbol assignments while discouraging marginal symbol collapse; the prefix-warning binary cross-entropy remains the dominant training objective.

\subsection{FSM Head and DFA Extraction}

The soft-state budget is benchmark-specific:
\begin{center}
  \begin{tabular}{lccc}
    \toprule
    Benchmark      & $K$ & $Q_{\max}$ & $H$ \\
    \midrule
    WebArena       & 16  & 16         & 3   \\
    $\tau^2$-Bench & 16  & 16         & 3   \\
    SkillsBench    & 16  & 16         & 3   \\
    TerminalBench  & 32  & 32         & 3   \\
    \bottomrule
  \end{tabular}
\end{center}
These settings are protocol-level capacity controls rather than test-set-tuned hyperparameters.
We match $Q_{\max}$ to $K$ so that the soft-FSM state budget does not introduce extra hidden capacity beyond the learned event alphabet size.
The default $K{=}16$ is the smallest alphabet budget used in the final cross-benchmark protocol that preserved stable validation behavior while keeping extracted automata inspectable.
TerminalBench uses $K{=}32$, $Q_{\max}{=}32$ because its command-line trajectories are longer and more heterogeneous (Table~\ref{tab:dataset_profile}), and pilot trajectory-split runs showed a modest benefit from the larger codebook.
We do not treat larger alphabets as uniformly preferable: they expand the downstream DFA audit surface, and SkillsBench pilots with larger alphabets did not dominate the selected $K{=}16$ task-sidechannel configuration.
Soft assignment temperature $\tau_{\mathrm{g}} = 0.5$ during training.
DFA induction: RPNI algorithm~\citep{oncina1992rpni} applied to hard-assigned symbol sequences from training trajectories.
Ambiguous traces (same symbol sequence appearing in both positive and negative training examples) are filtered before DFA induction.
Per-state risk calibration uses a held-out calibration split (10\% of training data, fixed).

Table~\ref{tab:dfa_filtering} reports per-benchmark DFA coverage statistics for the seed-aggregate PrefixGuard-DFA runs (mean across seeds).
These seed-level state counts support the coverage and filtering audit; the single-artifact state counts used for the compactness narrative are reported separately in Table~\ref{tab:dfa_posthoc_audit}.
The \emph{abstention rate} is the fraction of test prefixes that fall into DFA states with fewer than the minimum-count threshold and are therefore rejected for inference.
The \emph{trusted prefix rate} is $1 - \text{abstention rate}$; both are computed on the locked test split.
On WebArena and TerminalBench, abstention is about $0.1\%$ or lower; on SkillsBench, the high lexical diversity of bash commands inflates the abstention rate to $1.47\%$ and produces substantially more seed-level DFA states ($117\pm12$).

\begin{table}[H]
  \centering
  \caption{DFA coverage and filtering statistics (mean\,$\pm\,\sigma$ over seeds; locked test split).
    Abstention rate = fraction of prefixes rejected by minimum-count DFA state filter.}
  \label{tab:dfa_filtering}
  \small
  \begin{tabular}{lrrrr}
    \toprule
    Benchmark      & Seed states & Abstention & Trusted   & DFA AUPRC       \\
    \midrule
    WebArena       & $32\pm7$    & $0.10\%$   & $99.90\%$ & $0.792\pm0.015$ \\
    $\tau^2$-Bench & $16\pm4$    & $0.71\%$   & $99.29\%$ & $0.315\pm0.067$ \\
    SkillsBench    & $117\pm12$  & $1.47\%$   & $98.53\%$ & $0.190\pm0.021$ \\
    TerminalBench  & $184\pm3$   & $0.06\%$   & $99.94\%$ & $0.184\pm0.029$ \\
    \bottomrule
  \end{tabular}
\end{table}

\subsection{Automated Cross-Benchmark DFA Posthoc Audit}
\label{app:dfa_posthoc_audit}

To separate DFA inspection evidence from human interpretability evidence, we ran an automated posthoc audit over existing locked-test DFA artifacts for WebArena, $\tau^2$-Bench, SkillsBench, and TerminalBench.
The audit reports calibrated DFA metrics, trusted-state coverage, warning-state counts, and concentration of routed prefixes in the five most frequent states.

\input{tables/dfa_posthoc_audit.tex}

Table~\ref{tab:dfa_posthoc_audit} shows that WebArena has the most compact and risk-separating automaton among the four benchmarks.
$\tau^2$-Bench is compact, but its top five states cover 92.0\% of prefixes, so the audit surface is concentrated and less diagnostic.
SkillsBench and TerminalBench preserve high trusted-prefix coverage but expand to 151 and 187 states, respectively, weakening any claim that the extracted DFA is uniformly easy to inspect across benchmarks.
This audit is fully automatic; it does not measure human agreement, actionability, or annotation reliability.

Table~\ref{tab:hierarchical_dfa_diagnostic} reports validation-only routed-DFA diagnostics for SkillsBench and TerminalBench, where a single extracted DFA is largest.
These diagnostics keep the trained monitor and hard-symbol protocol fixed, change only the post-hoc DFA extraction into deployment-visible routes, and compare against route-only calibration baselines.
They suggest that hierarchical or mixture-of-DFA extraction may improve auditability, but the evidence is deliberately kept as future-work motivation rather than a locked-test deployment claim.

\input{tables/hierarchical_dfa_diagnostic.tex}

\clearpage
\section{Dataset Details}
\label{app:datasets}
\input{tables/dataset_splits.tex}

The four benchmarks use fixed split artifacts throughout the paper.
WebArena uses the repository's protocol split with a train-internal calibration subset for thresholding, while $\tau^2$-Bench, SkillsBench, and TerminalBench use their prepared train/calibration/validation/test or fit/calibration/validation/test split fields as listed in Table~\ref{tab:datasets}.
No result in the main paper changes split membership, calibration semantics, or the $H{=}3$ label contract.

\subsection{Additional Label Statistics}

\input{tables/label_statistics.tex}

Table~\ref{tab:label_stats} gives the label prevalence that sets each benchmark's random AUPRC baseline and summarizes the dominant observable failure channel.
The large gap between WebArena's positive-prefix rate and the other benchmarks is a consequence of shorter trajectories under the same inclusive $H{=}3$ window; it is one reason the main text reports both ranking quality and FAR-constrained first-alert diagnostics.

\subsection{Prefix Label Construction}

Given a trajectory $\tau = (c_1, \ldots, c_T)$ with outcome $y \in \{0, 1\}$, prefix label $p_t$ is assigned as:
\[
  p_t = \mathbf{1}[y = 0] \cdot \mathbf{1}[t \geq T - H],
\]
where $H = 3$ (see Appendix~\ref{app:horizon} for sensitivity analysis).
Equivalently, failed-trajectory prefixes are positive when the remaining suffix length satisfies $T-t \leq H$; this inclusive convention yields up to $H{+}1$ positive prefix positions per failed trajectory.
All other prefixes receive label $p_t = 0$ (including all prefixes of successful trajectories and failed-trajectory prefixes with more than $H$ remaining steps).

This label scheme captures the steps immediately preceding the failure point, where failure precursors are concentrated.
It does not attempt to label root-cause steps earlier in the trajectory.

\subsection{Evaluation Protocol and Metrics}
\label{app:eval_protocol}

All evaluations use the following protocol:
\begin{enumerate}
  \item Splits are fixed once and reused across all experiments.
        The \emph{calibration split} is a fixed 10\% held-out subset of training trajectories (separate from the validation and test sets); it is used both for monitor threshold selection and for DFA per-state risk score calibration.
  \item The test split is unlocked only once per experiment variant (no repeated evaluation on test).
  \item Threshold selection is performed on the calibration split.
  \item Area under the precision-recall curve (AUPRC) is computed using the scikit-learn \texttt{average\_precision\_score} function.
  \item Multi-seed experiments use 3 independent training seeds; results reported as mean $\pm$ standard deviation.
\end{enumerate}

Let $\mathcal{P}$ denote the set of evaluated test prefixes, and let each prefix $a\in\mathcal{P}$ have label $z_a\in\{0,1\}$ and risk score $s_a\in[0,1]$.
The sample-size column reports $N=|\mathcal{P}|$ evaluated prefixes (or scored LLM records for LLM baselines).
The positive-prefix rate is
\[
  r = \frac{1}{|\mathcal{P}|}\sum_{a\in\mathcal{P}} z_a,
\]
reported as $\pi_+$ in tables; it is the prefix-level prevalence under the $H{=}3$ label construction and is the random precision-recall baseline~\citep{davis2006relationship,boyd2012unachievable}.
After sorting prefixes by decreasing score, AUPRC is
\[
  \mathrm{AUPRC} = \sum_k (R_k - R_{k-1}) P_k,
\]
where $P_k$ and $R_k$ are precision and recall at rank $k$.
We use average precision (AP) and AUPRC interchangeably for this score-based metric; it is threshold-free and is the primary ranking metric.
Area under the receiver operating characteristic curve (AUROC), reported as receiver operating characteristic (ROC) in compact tables, is computed as the pairwise ranking statistic
\[
  \mathrm{AUROC}
  =
  \frac{1}{N_+N_-}
  \sum_{z_a=1,z_b=0}
  \left[
    \mathbf{1}(s_a>s_b)
    + \frac{1}{2}\mathbf{1}(s_a=s_b)
    \right],
\]
where $N_+$ and $N_-$ are the numbers of positive and negative prefixes.

For calibration, scores are partitioned into equal-width bins $\{B_m\}_{m=1}^M$, and expected calibration error (ECE) is
\[
  \mathrm{ECE}
  =
  \sum_{m=1}^M
  \frac{|B_m|}{|\mathcal{P}|}
  \left|
  \frac{1}{|B_m|}\sum_{a\in B_m} z_a
  -
  \frac{1}{|B_m|}\sum_{a\in B_m} s_a
  \right|.
\]
The Brier score, abbreviated Br., is the mean squared probability error
\[
  \mathrm{Brier}
  =
  \frac{1}{|\mathcal{P}|}
  \sum_{a\in\mathcal{P}} (s_a-z_a)^2 .
\]
Lower ECE and Brier values indicate better calibration.

At threshold $\gamma$ (the alert threshold from §\ref{sec:problem}), binary alerts are $\hat{z}_{a,\gamma}=\mathbf{1}[s_a\geq\gamma]$.
Let $\mathrm{TP}_\gamma,\mathrm{FP}_\gamma,\mathrm{TN}_\gamma,\mathrm{FN}_\gamma$ denote the resulting true-positive (TP), false-positive (FP), true-negative (TN), and false-negative (FN) prefix-level confusion counts.
The operating-point metrics in Table~\ref{tab:main_table_aux_metrics} are accuracy (Acc.), precision, and recall:
\[
  \mathrm{Acc}_\gamma
  =
  \frac{\mathrm{TP}_\gamma+\mathrm{TN}_\gamma}
  {\mathrm{TP}_\gamma+\mathrm{FP}_\gamma+\mathrm{TN}_\gamma+\mathrm{FN}_\gamma},
  \qquad
  \mathrm{Prec}_\gamma
  =
  \frac{\mathrm{TP}_\gamma}{\mathrm{TP}_\gamma+\mathrm{FP}_\gamma},
  \qquad
  \mathrm{Rec}_\gamma
  =
  \frac{\mathrm{TP}_\gamma}{\mathrm{TP}_\gamma+\mathrm{FN}_\gamma}.
\]
We also report the F1 score (F1) and false-positive rate (FPR):
\[
  \mathrm{F1}_\gamma
  =
  \frac{2\,\mathrm{Prec}_\gamma\,\mathrm{Rec}_\gamma}
  {\mathrm{Prec}_\gamma+\mathrm{Rec}_\gamma},
  \qquad
  \mathrm{FPR}_\gamma
  =
  \frac{\mathrm{FP}_\gamma}{\mathrm{FP}_\gamma+\mathrm{TN}_\gamma}.
\]
The compact auxiliary table packs F1 and FPR into one column as ``F1/FPR''.
For LLM baselines, $\gamma=0.5$ on the reported $p_{\mathrm{fail}}$ scores; for trained monitors, $\gamma$ is selected on the calibration split and then evaluated once on the locked test split.
Undefined ratios with zero denominator are not treated as wins and are omitted from aggregate means.
For symbolic monitors, \emph{states} denotes the automaton size $|Q|$.
The auxiliary table in Appendix~\ref{app:main_table_aux_metrics} reports these diagnostics for every main-table cell.

\subsection{Alert Lead Time}
\label{app:lead_time}

For a failed trajectory $i$ of length $T_i$ with first alert step $a_i=\min\{t:s_{i,t}\ge\gamma\}$, alert lead time is $(T_i-a_i)/T_i$---the fraction of the trajectory remaining after the alert fires; a value of $0.30$ means 30\% is still ahead.
If no alert fires on a failed trajectory, its lead time is defined as $0$.
Table~\ref{tab:lead_time} reports this trajectory-level first-alert diagnostic under calibration-selected successful-trajectory false-alarm rate (FAR) constraints.
These operating points do not by themselves establish intervention utility: they do not determine whether a deployment has a reversible action available after the alert or specify deployment-specific alert costs.

\input{tables/alert_lead_time.tex}

\section{Extended Ablation Results}
\label{app:ablations}

This section collects ablations that support the four experimental axes in the main text.
We prioritize controls that change one mechanism at a time: supervised non-sequential probes for recoverable signal, field drops for StepView evidence, confound controls for label geometry, and per-seed breakdowns for the Transformer backend.

\subsection{Main-Table Auxiliary Metrics}
\label{app:main_table_aux_metrics}

\input{tables/main_table_auxiliary_metrics.tex}

These metrics are secondary to the score-based AUPRC used in Table~\ref{tab:main_results}.
They are included to show calibration and thresholded operating behavior for the same artifacts, especially when a method has good ranking quality but a costly false-positive operating point.

\subsection{Non-Sequential Supervised Prefix-Signal Probes}
\label{app:supervised_controls}

The probes in Table~\ref{tab:supervised_prefix_controls} test whether observed prefixes contain outcome-related signal without requiring a recurrent monitor, differentiable symbolizer, or DFA state.
They support RQ1 as signal-availability diagnostics, not as deployable online monitor forms.

\input{tables/supervised_prefix_controls.tex}

\subsection{Predictive Process Monitoring Activity-LSTM Control}
\label{app:ppm_lstm_controls}

To make the predictive process monitoring comparison explicit, Table~\ref{tab:ppm_lstm_controls} reports an outcome-oriented activity-LSTM control in the style of sequence-based PPM baselines.
Each observed step is treated as a categorical activity from a train-only vocabulary and fed as a one-hot sequence to a single-layer LSTM.
This baseline uses the same $H{=}3$ warning labels and benchmark split protocols as the supervised-prefix controls, but it does not use TF-IDF text features, learned PrefixGuard symbols, FSM state, or DFA extraction.
Across all four benchmarks, PG-GRU remains substantially higher than the PPM activity-LSTM control.

\input{tables/ppm_lstm_controls.tex}

\subsection{Continuous StepView Sequence Controls}
\label{app:continuous_stepview_sequence_controls}

To isolate the contribution of PrefixGuard's learned discrete event abstraction, we ran WebArena controls that keep the same StepView TF-IDF step vectors and causal prefix supervision but remove the Gumbel symbolizer and automaton-facing discrete alphabet.
The resulting continuous sequence models score prefixes directly from StepView embeddings using either a causal GRU or a causal Transformer.
They use the same WebArena split, $H{=}3$ labels, seed, train/internal-calibration protocol, frontend, and StepView text mode as the single-seed supervised WebArena controls in Table~\ref{tab:continuous_stepview_sequence_controls}.

\input{tables/continuous_stepview_sequence_controls.tex}

The continuous sequence controls do not explain away PrefixGuard's WebArena performance: the stronger continuous Transformer control reaches 0.819 AUPRC, below PrefixGuard-GRU's 0.900 and close to the simpler TF-IDF prefix logistic control (0.818).
At the same time, the pooled MLP remains the strongest supervised WebArena control (0.940), so the WebArena claim should be stated as a tradeoff: PrefixGuard preserves online sequential state and automaton-facing discrete structure while retaining strong, but not best-in-table, predictive accuracy.

\subsection{Neural Encoder Diagnostic Controls}
\label{app:neural_encoder_controls}

Table~\ref{tab:neural_encoder_controls} summarizes diagnostic controls that replace the TF-IDF step encoder with frozen dense encoders.
We report them as negative diagnostics rather than headline comparisons.
These controls target a practical design question: whether the fixed lexical encoder can be replaced by a stronger off-the-shelf semantic embedding model without changing the rest of the monitor-learning recipe.

\begin{table}[H]
  \centering
  \caption{Diagnostic neural-encoder controls. Values are AUPRC/AUROC; higher is better.}
  \label{tab:neural_encoder_controls}
  \scriptsize
  \setlength{\tabcolsep}{3pt}
  \begin{tabularx}{\linewidth}{lXccX}
    \toprule
    Benchmark      & Dense encoder                                          & Dense result & TF-IDF ref. & Takeaway                       \\
    \midrule
    WebArena       & Nomic Embed v1.5~\citep{nussbaum2024nomicembed}        & 0.535/0.625  & 0.684/0.798 & Below TF-IDF.                  \\
    $\tau^2$-Bench & Qwen3-Embedding-0.6B~\citep{zhang2025qwen3embedding}   & 0.507/0.826  & 0.604/0.888 & Partial recovery, still below. \\
    SkillsBench    & Jina code embedding~\citep{gunther2023jinaembeddings2} & 0.217/0.754  & 0.397/0.823 & Clearly below TF-IDF.          \\
    \bottomrule
  \end{tabularx}
\end{table}

Across these controls, frozen dense encoders do not improve the monitor pipeline, so the main experiments keep TF-IDF as the fixed step encoder.
The result is consistent with the role of StepView in PrefixGuard: field-tagged lexical text contains many sparse but operationally decisive tokens, such as tool names, statuses, error fragments, file paths, and task-specific identifiers.
TF-IDF preserves these high-precision lexical cues directly under a small fixed feature budget, while dense sentence/code embeddings can smooth them into a semantic representation that is useful for retrieval but less aligned with prefix-level warning labels and downstream symbol induction.
We therefore view neural encoders as a separate future direction rather than a drop-in improvement: making them competitive likely requires encoder-specific fine-tuning, structured field pooling, or contrastive objectives tailored to monitor learning, not just replacing TF-IDF with a frozen embedding checkpoint.

\subsection{Position and Task-Prior Confound Controls}
\label{app:confound_controls}

Because positive prefixes are exactly the last $H{+}1$ steps of failed trajectories, a monitor could partially exploit step index, trajectory length, or task-type priors rather than lexical failure evidence.
We therefore run three matched controls across all four benchmarks (Table~\ref{tab:position_task_prior_controls}).

A \emph{t-only} model using $[t,\,t^2,\,\log(1{+}t),\,\sqrt{t}]$---deployment-realistic features with no content and no future information---stays far below PrefixGuard-GRU on every benchmark, indicating that current step position alone cannot explain the main results.
A \emph{t+T} model that additionally includes trajectory length $T$ (future information not available at deployment) is intentionally strong, confirming that the label definition ($t{\geq}T{-}H$) creates a near-end-of-trajectory signal when $T$ is known.
This oracle advantage is not exploitable in deployment, but it validates the need to keep full trajectory length out of online scoring.
The \emph{task-prior} model uses only task id, whitelisted metadata, and pre-observation task context.
It has non-trivial signal on WebArena and TerminalBench, but remains well below PrefixGuard-GRU, so task prior alone is insufficient to explain the learned monitor.

\input{tables/position_task_prior_controls.tex}

To further separate step content from sequential ordering, we use a corrected no-leakage \emph{content-scrambled} control (Table~\ref{tab:content_scrambled_controls}).
For each original prefix, the scrambled example permutes only the steps already visible inside that prefix and keeps the original prefix label.
Thus a prefix-$k$ example never observes steps that were future under the original online order.

\input{tables/content_scrambled_controls.tex}

The corrected controls sharpen the confound interpretation.
WebArena and $\tau^2$-Bench barely change under within-prefix shuffling ($0.908 \to 0.901$ and $0.687 \to 0.681$), so their AUPRC is driven mostly by step content and task-local evidence rather than chronological order.
SkillsBench and TerminalBench drop much more sharply ($0.526 \to 0.269$ and $0.548 \to 0.375$), indicating that within-prefix order or temporal structure carries material signal on those benchmarks.
These are soft-AUPRC controls: the SkillsBench and TerminalBench scrambled reruns intentionally skip final DFA/RPNI induction because the confound-control target is original-vs.-scrambled soft ranking.

\subsection{PrefixGuard-GRU Calibration Metrics}
\label{app:gru_calibration}

Table~\ref{tab:gru_calibration} reports calibration metrics for PrefixGuard-GRU on the locked test split.
ECE uses 15 equal-width bins; Brier score is the mean squared error between predicted probabilities and binary labels.
WebArena and TerminalBench values are averaged over 3 seeds; $\tau^2$-Bench over 2 seeds; SkillsBench over 3 seeds (seeds 13, 42, 7).
The high ECE variance on SkillsBench ($\pm0.027$) reflects one seed (seed 42) that reaches ECE $=0.113$ vs.\ $0.054$--$0.058$ for the other two; the Brier score is more stable.

\begin{table}[H]
  \centering
  \caption{PrefixGuard-GRU calibration metrics (mean\,$\pm\,\sigma$ over seeds; locked test split).
    ECE: expected calibration error (15 bins); Brier: mean squared probability error.}
  \label{tab:gru_calibration}
  \small
  \begin{tabular}{lrrr}
    \toprule
    Benchmark      & AUPRC           & ECE             & Brier           \\
    \midrule
    WebArena       & $0.900\pm0.013$ & $0.028\pm0.002$ & $0.079\pm0.002$ \\
    $\tau^2$-Bench & $0.692\pm0.005$ & $0.015\pm0.002$ & $0.047\pm0.000$ \\
    SkillsBench    & $0.533\pm0.020$ & $0.075\pm0.027$ & $0.074\pm0.011$ \\
    TerminalBench  & $0.557\pm0.005$ & $0.033\pm0.001$ & $0.046\pm0.001$ \\
    \bottomrule
  \end{tabular}
\end{table}

\subsection{StepView Field Ablation}

Table~\ref{tab:crossbench_stepview_field_drop_full} completes the StepView field-drop audit across benchmark families and monitor heads.
The table reports locked-test soft AUPRC under the matched single-seed protocols used for the corresponding all-fields runs.
The completion results refine the WebArena-only story: post-action \texttt{result} evidence is crucial on WebArena, but there is no universal single-field dependency across all benchmarks.
$\tau^2$-Bench and TerminalBench degrade sharply under observation-only inputs, TerminalBench also loses signal when \texttt{status} is removed, and several SkillsBench/TerminalBench field drops slightly improve AUPRC, consistent with representation noise or single-seed variance rather than evidence that the omitted fields are intrinsically useless.
These completion controls are intentionally soft-only and do not produce final DFA/RPNI artifacts.

\input{tables/crossbench_stepview_field_drop_full.tex}

\subsection{Transformer Per-Seed Breakdown}
\label{app:transformer_seeds}

Table~\ref{tab:transformer_seeds} reports locked-test score-based AUPRC for each Transformer seed across all four benchmarks.

\begin{table}[H]
  \centering
  \caption{PrefixGuard-Transformer per-seed locked-test score-based AUPRC.}
  \label{tab:transformer_seeds}
  \small
  \begin{tabular}{lrc}
    \toprule
    Dataset        & Seed & Score AUPRC \\
    \midrule
    WebArena       & 1    & 0.884       \\
    WebArena       & 2    & 0.893       \\
    WebArena       & 3    & 0.898       \\
    \midrule
    $\tau^2$-Bench & 13   & 0.697       \\
    $\tau^2$-Bench & 42   & 0.708       \\
    $\tau^2$-Bench & 123  & 0.725       \\
    \midrule
    SkillsBench    & 13   & 0.493       \\
    SkillsBench    & 42   & 0.446       \\
    SkillsBench    & 7    & 0.495       \\
    \midrule
    TerminalBench  & 7    & 0.548       \\
    TerminalBench  & 42   & 0.558       \\
    TerminalBench  & 123  & 0.560       \\
    \bottomrule
  \end{tabular}
\end{table}

\subsection{DFA State Behavioral Alignment}
\label{app:dfa_state_alignment}

We performed a single-coder qualitative alignment check on representative extracted-DFA artifacts.
The goal is narrow: verify whether high-risk DFA states can be assigned plausible behavioral names from observed prefix exemplars, without claiming human interpretability, causal root-cause annotation, or deployment actionability.
The automated state-count and concentration statistics remain the primary DFA audit evidence in Table~\ref{tab:dfa_posthoc_audit}.
Table~\ref{tab:dfa_state_alignment} summarizes the qualitative alignment scope before the per-benchmark examples.

\begin{table}[H]
  \centering
  \caption{Qualitative DFA state-alignment summary.
    This single-coder diagnostic names representative high-risk states but does not constitute a multi-coder interpretability study.}
  \label{tab:dfa_state_alignment}
  \small
  \setlength{\tabcolsep}{3pt}
  \begin{tabularx}{\linewidth}{lrlX}
    \toprule
    Benchmark      & States & Coded scope                  & Representative high-risk phases                                               \\
    \midrule
    WebArena       & 29     & all 27 trusted states        & early reset; explicit error; click loop; misaligned or external search        \\
    $\tau^2$-Bench & 20     & 13 trusted states            & lookup fan-out; policy handoff; late unresolved troubleshooting               \\
    SkillsBench    & 151    & representative states        & environment probing; script repair; dependency gaps; output verification      \\
    TerminalBench  & 187    & warning states plus examples & tool bootstrapping; JSON/tool-call failure; implementation setup; late repair \\
    \bottomrule
  \end{tabularx}
\end{table}

\paragraph{WebArena.}
We performed a qualitative alignment study on the 29-state WebArena DFA extracted from PrefixGuard-FSM seed~1 (run~R247).
Each state was coded based on: (i)~the dominant tools observed in exemplar step views assigned to that state, (ii)~representative typed text and action arguments, and (iii)~the normalised trajectory position at which prefixes are most frequently routed there.
All 27~trusted states were coded; 2~states were excluded as untrusted ($< 10$ calibration prefixes).

\textbf{Findings.}
The 6~warning states ($\text{risk} \geq 0.34$) cluster into five semantically coherent failure-precursor patterns:
(1)~\emph{Early navigation reset} (q0, risk\,$= 0.857$): click + \texttt{goto homepage} at $\bar{t}/T{=}0.25$, indicating the agent resets to the start page early in the trajectory;
(2)~\emph{Explicit error signal} (q28, risk\,$= 0.548$): typed text contains error phrases such as ``sorry we are out of stock'';
(3)~\emph{Repetitive click loop} (q22, risk\,$= 0.518$): six or more consecutive clicks with no \texttt{type} action at mid-trajectory, the agent stuck in a navigation cycle;
(4)~\emph{Misaligned search query} (q12, risk\,$= 0.510$): \texttt{type} with geographic or named-entity search terms entered at $\bar{t}/T{=}0.25$, the agent searching for wrong targets;
(5)~\emph{External-search redirect} (q24, risk\,$= 0.434$): \texttt{new\_tab} + \texttt{goto google.com} + \texttt{type}, the agent escaping to an external search engine mid-task; and
(6)~\emph{Early unproductive browsing} (q1, risk\,$= 0.342$): click~+~scroll at $\bar{t}/T{=}0.25$ without targeted input.

The 21~normal states (risk\,$< 0.25$) correspond to productive task phases: credential entry (q4, risk\,$= 0.099$), productive backtracking via \texttt{go\_back} (q17, risk\,$= 0.085$), and task-specific search with precise typed queries (q26, risk\,$= 0.038$).
The lowest-risk state (q26) contains exemplars with exact task-relevant search terms (e.g., ``color utility'', ``awesome\_web\_agents''), consistent with successful task execution.

Table~\ref{tab:dfa_state_alignment_webarena} reports all warning states and five representative normal states.

\begin{table}[H]
  \centering
  \caption{WebArena DFA state behavioral alignment (seed~1, 29 states; 2 untrusted excluded).
    $\bar{t}/T$: mean normalised step position of routed prefixes.
    Warning states ($\text{risk}\geq0.34$) are listed first.}
  \label{tab:dfa_state_alignment_webarena}
  \small
  \setlength{\tabcolsep}{4pt}
  \begin{tabular}{clrrcl}
    \toprule
    State & Behavioral Phase         & Risk  & Eval & $\bar{t}/T$ & Representative Step                        \\
    \midrule
    \multicolumn{6}{l}{\textit{Warning states (risk $\geq$ 0.34)}}                                             \\
    q0    & Early navigation reset   & 0.857 & 544  & 0.25        & \texttt{click; goto homepage}              \\
    q28   & Explicit error message   & 0.548 & 40   & 0.81        & \texttt{type [out of stock\ldots]}         \\
    q22   & Repetitive click loop    & 0.518 & 595  & 0.40        & \texttt{click$\times$6 (no type)}          \\
    q12   & Misaligned search query  & 0.510 & 643  & 0.25        & \texttt{type [CMU / restaurants near CMU]} \\
    q24   & External-search redirect & 0.434 & 276  & 0.40        & \texttt{new\_tab; goto google.com; type}   \\
    q1    & Early scroll-and-click   & 0.342 & 379  & 0.25        & \texttt{click; scroll [down]}              \\
    \midrule
    \multicolumn{6}{l}{\textit{Representative normal states (risk $<$ 0.25)}}                                  \\
    q17   & Productive backtracking  & 0.085 & 56   & 0.83        & \texttt{go\_back$\times$5}                 \\
    q4    & Credential entry         & 0.099 & 139  & 0.67        & \texttt{type [username]; click}            \\
    q26   & Task-specific search     & 0.038 & 90   & 0.50        & \texttt{type [color utility]; click}       \\
    q8    & Long-form text entry     & 0.122 & 80   & 0.74        & \texttt{type [multi-sentence message]}     \\
    q7    & Short-label selection    & 0.119 & 111  & 0.75        & \texttt{type [feature]; click}             \\
    \bottomrule
  \end{tabular}
\end{table}

\textbf{Scope and caveats.}
This alignment is based on exemplar inspection by one coder for one seed on WebArena; it constitutes preliminary evidence of state interpretability, not a validated qualitative study.
All 6~warning states were independently assignable to distinct, semantically coherent failure-precursor categories, supporting the claim that DFA states capture meaningful operational phases rather than arbitrary quantization artifacts.
A full study with multiple coders, inter-rater reliability, and trajectory-level ground-truth annotations remains future work.

\paragraph{$\tau^2$-Bench.}
We repeated the same single-coder state alignment on the 20-state $\tau^2$-Bench DFA extracted from the adaptive StepView+GRU run~R313 (seed~13).
Thirteen states were trusted and coded; 7~states were excluded as untrusted ($<10$ calibration prefixes).
The three trusted warning states under the calibrated threshold form recognizable but weaker behavioral groups:
(1)~\emph{Mid-dialogue grounded lookup fan-out} (q1, risk\,$=0.357$): repeated account, line, plan, device, order, or reservation lookups after the task is underway;
(2)~\emph{Out-of-policy request handoff} (q19, risk\,$=0.337$): a special-policy request, such as post-booking insurance, cannot be handled directly and is refused or transferred; and
(3)~\emph{Late unresolved policy or troubleshooting} (q15, risk\,$=0.218$): near-terminal compensation, refund-policy, or MMS/service troubleshooting remains unresolved.
Trusted normal states cover routine task phases such as initial greeting or identity lookup (q0, risk\,$=0.031$), billing remediation (q16, risk\,$=0.082$), transactional updates or verification (q14, risk\,$=0.060$), and mid-course telecom diagnosis (q5, risk\,$=0.018$).

This $\tau^2$-Bench alignment is less diagnostic than WebArena's: the risk range is flatter, q0 alone routes 16,754 test prefixes, the top five states cover 92.0\% of prefixes (Table~\ref{tab:dfa_posthoc_audit}), q19 has only 31 test prefixes, and a high-risk low-support state (q18, risk\,$=0.372$) is excluded by the trusted-state filter.
Thus $\tau^2$-Bench supports a narrower claim: extracted DFA states can still be assigned coherent behavioral labels, but the finite-state audit is concentrated and weaker than on WebArena.

\begin{table}[H]
  \centering
  \caption{$\tau^2$-Bench DFA state behavioral alignment (R313 seed~13, 20 states; 7 untrusted excluded).
    $\bar{t}/T$: mean normalised step position of routed prefixes.
    Trusted warning states are listed first.}
  \label{tab:dfa_state_alignment_tau2}
  \small
  \setlength{\tabcolsep}{3pt}
  \begin{tabular}{clrrcl}
    \toprule
    State & Phase                           & Risk  & Eval  & $\bar{t}/T$ & Representative Step                      \\
    \midrule
    \multicolumn{6}{l}{\textit{Trusted warning states}}                                                              \\
    q1    & Grounded lookup fan-out         & 0.357 & 3949  & 0.57        & \texttt{get\_details / reservation}      \\
    q19   & Out-of-policy handoff           & 0.337 & 31    & 0.59        & \texttt{respond [insurance not allowed]} \\
    q15   & Late unresolved troubleshooting & 0.218 & 456   & 0.78        & \texttt{respond [refund/MMS fails]}      \\
    \midrule
    \multicolumn{6}{l}{\textit{Representative trusted normal states}}                                                \\
    q0    & Greeting / identity lookup      & 0.031 & 16754 & 0.25        & \texttt{respond; customer lookup}        \\
    q11   & Policy summary / confirmation   & 0.137 & 2992  & 0.72        & \texttt{respond [summary]}               \\
    q12   & Info collection / guidance      & 0.116 & 1471  & 0.72        & \texttt{get\_order; respond}             \\
    q16   & Billing remediation             & 0.082 & 607   & 0.68        & \texttt{check/make payment}              \\
    q14   & Transactional update            & 0.060 & 411   & 0.81        & \texttt{send\_certificate; check\_sim}   \\
    q5    & Mid-course telecom diagnosis    & 0.018 & 42    & 0.38        & \texttt{toggle\_airplane\_mode}          \\
    \bottomrule
  \end{tabular}
\end{table}

\paragraph{SkillsBench and TerminalBench.}
We also ran the same posthoc alignment procedure on the large-DFA benchmarks using the existing SkillsBench R340 and TerminalBench R341 artifacts.
Because these automata are much larger (151 and 187 states), we report representative state alignments rather than claiming a full state-by-state human interpretation.
For SkillsBench, the warning states mainly correspond to fragile coding-workflow phases: early high-level intent before concrete evidence (q0), script execution and environment probing (q117), dependency or solver repair (q128), output-artifact verification (q64), and fallback/malformed-output handling (q46).
For TerminalBench, all six trusted warning states were coded; they concentrate around high-risk initial tool bootstrapping (q0/q109), early JSON/tool-call failure (q57), second-step implementation setup (q120), opaque low-position probes (q29), and late iterative repair/evaluation (q21).

The large-benchmark alignments support the same conservative conclusion as the automated audit in Table~\ref{tab:dfa_posthoc_audit}: states are often behaviorally nameable, but auditability weakens when the automaton expands and the warning states become task/tool-family specific.
SkillsBench has 151 states with 27 trusted warning states, while TerminalBench has 187 states with only 6 trusted warning states; their top-five state shares are 59.6\% and 62.0\%, respectively.
These are useful diagnostic artifacts, not standalone human-interpretability evidence.

\begin{table}[H]
  \centering
  \caption{Representative DFA state behavioral alignment for SkillsBench and TerminalBench.
    These rows are selected from existing locked-test posthoc artifacts (SkillsBench R340; TerminalBench R341) and are not a full multi-coder study.}
  \label{tab:dfa_state_alignment_large}
  \scriptsize
  \setlength{\tabcolsep}{2.5pt}
  \begin{tabularx}{\linewidth}{llXrrrX}
    \toprule
    Bench    & State & Behavioral Phase                           & Risk  & Eval  & $\bar{t}/T$ & Representative Cue                                                    \\
    \midrule
    \multicolumn{7}{l}{\textit{SkillsBench selected warning states}}                                                                                                    \\
    Skills   & q0    & Early task-intent statement                & 0.217 & 6936  & 0.27        & initial \texttt{respond} plan for tutorial/video task                 \\
    Skills   & q117  & Script execution / environment probe       & 0.155 & 1493  & 0.57        & run generated analysis script or check runtime                        \\
    Skills   & q128  & Solver/script repair under dependency gaps & 0.170 & 967   & 0.66        & rewrite script after missing dependency or partial output             \\
    Skills   & q64   & Output artifact verification               & 0.259 & 197   & 0.79        & inspect generated OBJ, masks, or output files                         \\
    Skills   & q67   & Environment setup / final verification     & 0.296 & 120   & 0.79        & create venv, install packages, or verify final export                 \\
    Skills   & q46   & Fallback / malformed-output handling       & 0.213 & 300   & 0.73        & fallback workbook/output creation after blocked data or tool mismatch \\
    \midrule
    \multicolumn{7}{l}{\textit{SkillsBench representative normal states}}                                                                                               \\
    Skills   & q73   & Early file/media probe                     & 0.063 & 9844  & 0.40        & inspect video metadata, files, or package availability                \\
    Skills   & q1    & Skill activation / loading                 & 0.064 & 9391  & 0.25        & \texttt{load\_skill} / \texttt{activate\_skill} at first step         \\
    Skills   & q62   & Early shell input inspection               & 0.064 & 5865  & 0.40        & inspect JSON, packet-capture (PCAP), Excel, or media input files      \\
    Skills   & q106  & Analysis script planning                   & 0.052 & 5421  & 0.50        & construct video, spreadsheet, or packet-analysis script               \\
    \midrule
    \multicolumn{7}{l}{\textit{TerminalBench trusted warning states}}                                                                                                   \\
    Terminal & q109  & High-risk initial environment probe        & 0.269 & 4593  & 0.25        & VM, SQL, file-edit, or environment inspection at first step           \\
    Terminal & q57   & Early JSON/tool-call failure               & 0.208 & 269   & 0.50        & invalid JSON or opaque command/tool-call failure                      \\
    Terminal & q120  & Second-step implementation setup           & 0.206 & 1128  & 0.40        & create/search/update implementation after initial inspection          \\
    Terminal & q29   & Opaque low-position probe/error            & 0.155 & 20    & 0.24        & repeated opaque tool output or schema/tool errors                     \\
    Terminal & q21   & Late iterative repair/evaluation           & 0.128 & 22    & 0.57        & repeated benchmark-specific repair and evaluation                     \\
    Terminal & q0    & High-risk initial tool bootstrap           & 0.125 & 19851 & 0.25        & broad first-step plan or multi-tool setup                             \\
    \midrule
    \multicolumn{7}{l}{\textit{TerminalBench representative normal states}}                                                                                             \\
    Terminal & q1    & Routine initial command/check              & 0.044 & 32464 & 0.25        & first command, file listing, or simple setup                          \\
    Terminal & q98   & Initial file/problem inspection            & 0.046 & 13789 & 0.25        & inspect files, binaries, or task interfaces                           \\
    Terminal & q131  & Dependency install / verification          & 0.046 & 6692  & 0.44        & package install or certificate/model verification                     \\
    Terminal & q164  & Mid-course bulk command execution          & 0.035 & 3220  & 0.50        & multi-command setup, service restart, macro, or VM launch             \\
    \bottomrule
  \end{tabularx}
\end{table}

\subsection{Operating-Point Analysis}

Table~\ref{tab:lead_time} reports trajectory-level operating points selected by calibration-set successful-trajectory false-alarm rate (FAR) constraints.
The complementary prefix-level precision-recall (PR) and receiver operating characteristic (ROC) curves in Figure~\ref{fig:pr_roc_curves} show ranking behavior before committing to a deployment threshold.
Together, these diagnostics separate score ranking from alarm burden; whether intervention is practically useful depends on deployment-specific costs and reversibility.

\begin{figure}[H]
  \centering
  \includegraphics[width=\linewidth]{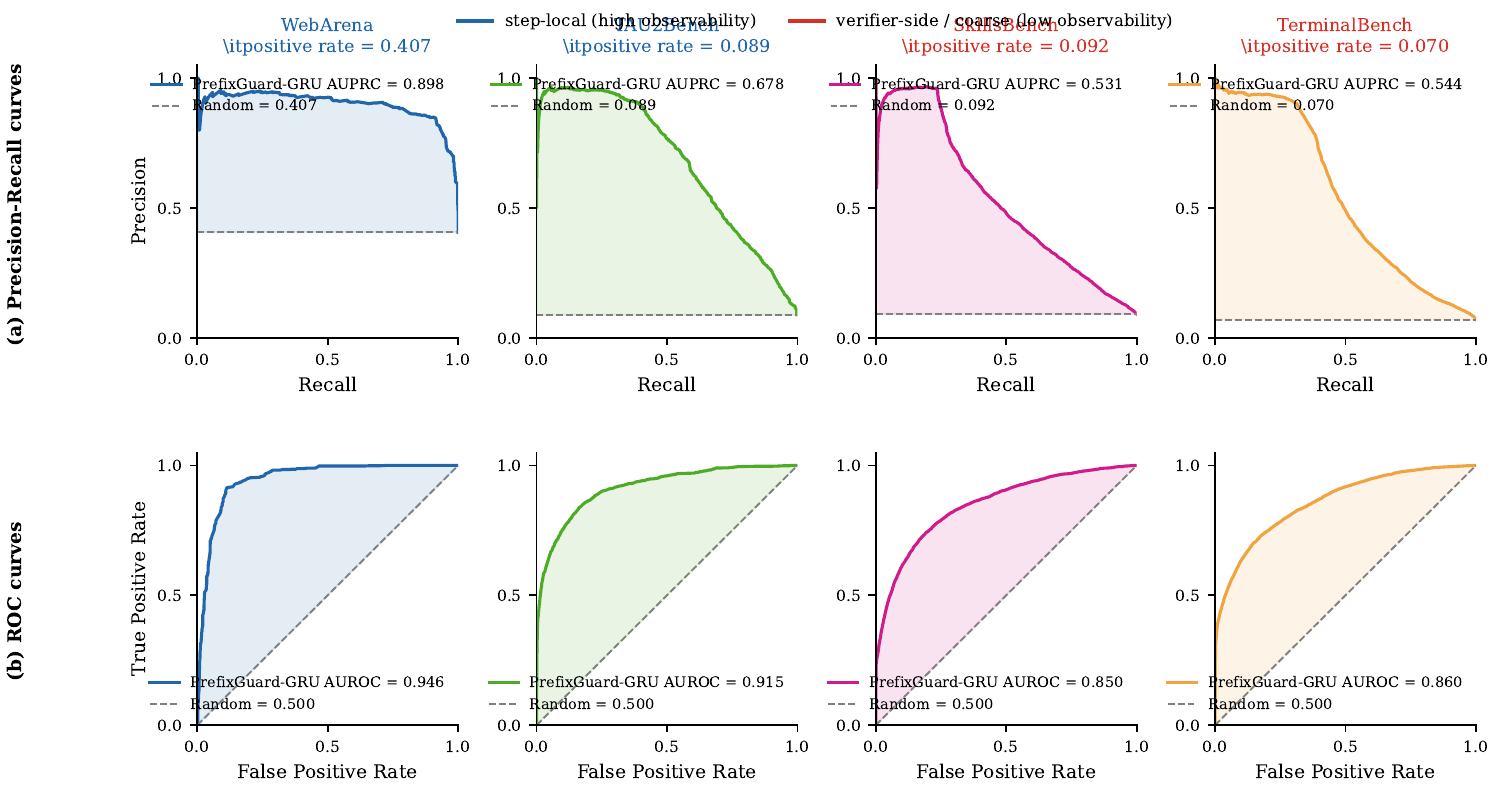}
  \caption{Representative precision-recall (PR) and receiver operating characteristic (ROC) curves for PrefixGuard-GRU with StepView; dashed lines show random baselines.}
  \label{fig:pr_roc_curves}
\end{figure}

\section{Horizon Sensitivity}
\label{app:horizon}

Table~\ref{tab:horizon} reports a validation-only horizon scan for $H \in \{1,3,5\}$ on all four benchmarks.
Each row retrains PrefixGuard-GRU with the benchmark's main recipe and evaluates on the validation split only.

\input{tables/horizon_sensitivity.tex}

Across all four benchmarks, $H{=}1$ gives the best validation score-based AUPRC, score-based AUROC, and ECE, while larger $H$ increases the positive-prefix rate and alert lead time.
The main paper keeps $H{=}3$ as a fixed operating contract for comparability with the reported locked-test experiments, but the validation scan does not support treating $H{=}3$ as globally optimal.
Any future dataset-specific locked-test comparison should first freeze the dataset's horizon from validation metrics, then evaluate the chosen horizon once on the locked test split.

\section{LLM Baseline Prompts}
\label{app:llm_prompts}

The LLM-as-judge baseline uses a two-message prompt: a fixed system message (Listing~\ref{lst:judge_system}) and a per-prefix user message rendered from the observed StepView prefix (Listing~\ref{lst:judge_user}).

\begin{lstlisting}[style=promptstyle,
  caption={System Message},
  label={lst:judge_system}]
You are an evaluation system for prefix-level failure
warning prediction in tool-using AI agent trajectories. You
only see the observed prefix, not future steps. The prompt
includes WARNING_HORIZON_STEPS. Your task is to estimate
the probability that the observed prefix is a positive
warning prefix: the final trajectory fails and the remaining
suffix length is at most WARNING_HORIZON_STEPS. This is not
asking whether the trajectory eventually fails at any later
time; a failing trajectory can still be negative if failure
is not imminent within the warning horizon. Rate this
positive warning prefix probability as an INTEGER from 0 to
100. 0 = certainly negative under the warning-label
contract, 100 = certainly positive under the warning-label
contract. Most prefixes are not extreme cases -- use the
full range 0-100. Do NOT default to 0 or 100 unless the
evidence is overwhelming. Output exactly one JSON object:
{"p_fail": <integer 0-100>}
\end{lstlisting}

\begin{lstlisting}[style=promptstyle,
  caption={User Message Template (angle-bracket fields filled per prefix)},
  label={lst:judge_user}]
TASK: <task_intent>
PREFIX_STEP_INDEX: <t>
PREFIX_OBSERVED_STEPS: 1..<t>
WARNING_HORIZON_STEPS: <H>
INPUT_RENDER: stepview

OBSERVED_PREFIX:
STEP <i>:
  METADATA=[...]
  ACTION=[action=<action_text>; tool=<tool_name>; args=<json>]
  RESULT=[status=<status>; text=<result_text>]
  OBSERVATION=[...]
...

Return only JSON with a calibrated positive-warning-prefix
probability, for example:
{"p_fail": 37}
\end{lstlisting}

We considered adding in-context calibrated examples to this full-prefix prompt, but did not execute that variant because it would not be a small marginal cost under the matched evaluation protocol.
Using one full-prefix training example per prompt is estimated to require roughly 75M input tokens across the four $N{=}200$ sampled evaluations.
For WebArena, we additionally ran a stronger zero-shot DeepSeek-V4-Pro baseline with a 1M context window and thinking disabled on the same $N{=}200$ full-prefix records; it parsed all prefixes but reached only $0.450$ AUPRC.
We therefore report zero-shot full-prefix LLM baselines as cost-controlled diagnostics; few-shot LLM monitors would require a separate compression or retrieval design.

\input{content/proof.tex}

%% file: content/related_work_extended.tex
\section{Extended Related Work}
\label{app:extended_related_work}

\paragraph{LLM-agent evaluation and judges.}
Agent benchmarks usually assign success only after a trajectory has ended, either through environment-specific task verifiers~\citep{zhou2024webarena,barres2025tau2bench,li2026skillsbench,merrill2026terminalbench} or through retrospective LLM-as-judge protocols~\citep{zheng2023judging}.
Autonomous evaluator and refinement systems similarly score or improve completed attempts after observing the full interaction~\citep{pan2024autonomous}.
These protocols are appropriate for benchmark grading, but they do not provide a deployable early-warning signal: converting a judge into an online monitor requires repeated inference on every prefix, and the judge still reasons over a raw heterogeneous history rather than a calibrated temporal state.
\textsc{PrefixGuard} targets the complementary setting.
It learns domain-calibrated temporal statistics from outcome-labeled prefixes and emits online risk scores at each step without deployment-time LLM inference.

\paragraph{Runtime verification and specification mining.}
Classical runtime verification and specification-based monitoring check traces against formal properties over known signals and event vocabularies~\citep{leucker2009brief,bauer2011runtime,bartocci2018specification}.
Specification mining instead infers likely behavioral rules from observed executions~\citep{ammons2002mining,lo2006mining,li2014specification}, and recent work has explored closing the loop between runtime monitors and autonomous decision making~\citep{sinha2023closing}.
These lines of work supply the right conceptual goal--monitor an evolving execution rather than grade only the final outcome--but they rely on a stable observation alphabet, a fixed formalism, or a queryable system interface.
LLM-agent traces violate these assumptions: a single run can mix browser actions, tool calls, dialogue turns, shell outputs, and benchmark-specific metadata under evolving logging formats.
\textsc{StepView} addresses this interface gap by inducing a typed adapter once from raw training traces, then freezing deterministic field extraction before monitor training and evaluation.

\paragraph{Trace abstraction and prefix prediction.}
Dialogue-flow extraction and workflow-memory methods recover recurring conversational or agentic behavior patterns~\citep{burdisso2024dialog2flow,sreedhar2024unsupervised,wang2024agent}.
Predictive process monitoring and prescriptive alarm systems use prefixes to anticipate case outcomes or trigger interventions in structured workflows~\citep{teinemaa2019outcome,tax2017predictive,fahrenkrog2022fire}, while log anomaly detection learns from system log templates and event sequences~\citep{du2017deeplog,he2016experience}.
Early time-series classification further motivates fixed-horizon prediction, where a decision is useful only if made before the terminal event~\citep{dachraoui2015adaptive,mori2017early}.
\textsc{PrefixGuard} adapts this prefix-prediction viewpoint to LLM-agent execution.
The warning label is tied to a finite horizon before terminal failure, but the event alphabet is not given by a process engine or log parser; the monitor must learn representations and risk states from typed, partially benchmark-specific step records.

\paragraph{Precision-recall ceilings and contaminated distributions.}
Our observability ceiling is closest to metric-theoretic work on precision-recall curves and statistical work on mutually contaminated distributions.
PR curves are especially sensitive to class skew, their achievable regions differ from ROC geometry, and AUCPR/AP estimation depends on the interpolation convention~\citep{davis2006relationship,boyd2012unachievable,boyd2013area}.
Separately, asymmetric label-noise, positive-unlabeled, and mixture-proportion estimation work studies when an observed distribution is a mixture of latent components and when the mixture weight is identifiable~\citep{blanchard2010semi,scott2013classification,menon2015learning,ramaswamy2016mixture}.
We combine these ideas in a prefix-warning-specific diagnostic: if a fraction of failed prefixes is distributionally identical to negatives under the observed trace representation, no trace-only scorer can rank that hidden component above negatives, inducing a prevalence-dependent AUPRC envelope.

\paragraph{Auditable neural-symbolic monitors.}
Finite-state models are attractive for online monitoring because they expose a compact state space that can be inspected after deployment.
Prior work extracts automata from recurrent networks~\citep{weiss2018extracting}, learns interpretable DFA sequence classifiers through discrete optimization~\citep{shvo2021interpretable}, and provides active automata-learning tooling for queryable systems~\citep{musovic2022aalpy,vonberg2025extending}.
These methods typically assume fixed symbolic observations or access to a system that can answer membership/equivalence-style queries.
\textsc{PrefixGuard} uses finite-state structure more conservatively: learned symbols can be compiled into calibrated state-risk machines, but DFA extraction is treated as an audit artifact rather than as a guarantee that every benchmark admits a small exact automaton.
Our cross-benchmark results therefore position neural-symbolic monitoring as a useful diagnostic boundary: finite-state inspection is strongest when the induced automaton is compact and risk-separating, and weaker when heterogeneous traces require larger, less concentrated state machines.

%% file: content/algorithm.tex
\begin{algorithm}[H]
    \caption{\textsc{TrainPrefixWarningMonitor}}
    \label{alg:train}
    \begin{algorithmic}[1]
        \Require training trajectories $\mathcal{D}_{\mathrm{train}}$, calibration trajectories $\mathcal{D}_{\mathrm{cal}}$, validation trajectories $\mathcal{D}_{\mathrm{val}}$, horizon $H$, monitor backend
        \State Induce the deterministic StepView adapter from $\mathcal{D}_{\mathrm{train}}$ using the offline LLM-assisted schema step
        \State Fit a single TF-IDF vectorizer on concatenated StepView text from $\mathcal{D}_{\mathrm{train}}$; encode all steps to embeddings $\{\mathbf{e}_t\}$
        \State Assign prefix labels: $p_t = \mathbf{1}[y = 0 \text{ and } t \geq T - H]$
        \State Initialize symbol projection network, selected monitor backend, and linear scorer
        \For{each training epoch}
        \State Compute soft symbol assignments $\boldsymbol{\alpha}_t$ from embeddings; run monitor backend; compute $\mathcal{L}$; update all parameters
        \If{epoch $\bmod$ eval\_every = 0}
        \State Evaluate score-based AUPRC on $\mathcal{D}_{\mathrm{val}}$; save best checkpoint
        \EndIf
        \EndFor
        \If{DFA extraction is requested}
        \State Symbolize $\mathcal{D}_{\mathrm{train}}$ with hard states; fit RPNI DFA
        \State Calibrate per-state risk from $\mathcal{D}_{\mathrm{cal}}$
        \EndIf
        \State \Return best checkpoint, (optionally) compiled DFA
    \end{algorithmic}
\end{algorithm}

%% file: tables/dfa_posthoc_audit.tex
\begin{table}[t]
  \centering
  \caption{Automated cross-benchmark DFA posthoc audit shows that finite-state auditability is clearest in compact regimes and weakens as state counts grow.
    AUPRC/AUROC are single-artifact audit scores, whereas Table~\ref{tab:main_results} reports the extracted-twin seed aggregate for PrefixGuard-DFA.
    The audit summarizes DFA structure and calibrated state risk; it is not a human interpretability study.}
  \label{tab:dfa_posthoc_audit}
  \small
  \setlength{\tabcolsep}{4pt}
  \begin{tabular}{lrrrrrrr}
    \toprule
    Benchmark & AUPRC & AUROC & States & Trusted & Warning & Top-5 Share & Max Risk \\
    \midrule
    WebArena      & 0.649 & 0.771 & 29  & 27  & 6  & 0.551 & 0.857 \\
    $\tau^2$-Bench     & 0.248 & 0.776 & 20  & 13  & 3  & 0.920 & 0.372 \\
    SkillsBench   & 0.169 & 0.663 & 151 & 74  & 27 & 0.596 & 0.589 \\
    TerminalBench & 0.145 & 0.676 & 187 & 166 & 6  & 0.620 & 0.269 \\
    \bottomrule
  \end{tabular}
\end{table}

%% file: tables/hierarchical_dfa_diagnostic.tex
\begin{table*}[t]
  \centering
  \caption{Validation-only hierarchical DFA diagnostic for benchmarks where a single extracted DFA weakens.
  Routes use deployment-visible metadata where possible; ``route prior'' is a calibration-label baseline with no DFA transitions.
  $\Delta_G$ is AUPRC gain over the matched global DFA, and $\Delta_P$ is gain over the route-only prior.
  These are sanity diagnostics for finite-state auditability limits, not locked-test performance claims.}
  \label{tab:hierarchical_dfa_diagnostic}
  \footnotesize
  \setlength{\tabcolsep}{3pt}
  \resizebox{\textwidth}{!}{%
    \begin{tabular}{llrrrrrrrll}
      \toprule
      Benchmark & Route key / system & AUPRC & AUROC & $\Delta_G$ & $\Delta_P$ & Routes & States & Local states & Top-5 share & Interpretation \\
      \midrule
      SkillsBench & global / task-family collapsed
        & 0.1876 & 0.6685 & -- & -- & 1 & 134 & 134 / 134 / 134 & -- & reproduction sanity \\
      SkillsBench & agent-model route prior
        & 0.1425 & 0.5773 & $-0.0451$ & -- & 19 & 19 & 1 / 1 / 1 & -- & control only \\
      SkillsBench & agent-model hierarchical DFA
        & 0.1925 & 0.6311 & $+0.0048$ & $+0.0500$ & 19 & 280 & 52 / 8 / 36.2 & 0.9636 & weak partial recovery \\
      \midrule
      TerminalBench & global before cluster stop
        & 0.1377 & 0.6685 & -- & -- & 1 & 187 & 187 / 187 / 187 & -- & cluster RPNI too slow \\
      TerminalBench & agent-model route prior
        & 0.1697 & 0.7107 & $+0.0320$ & -- & 45 & 45 & 1 / 1 / 1 & -- & control only \\
      TerminalBench & agent-model hierarchical DFA
        & 0.2483 & 0.7715 & $+0.1106$ & $+0.0786$ & 45 & 709 & 45 / 13 / 29.8 & 0.9267 & positive routed-DFA sanity \\
      \bottomrule
    \end{tabular}}
\end{table*}

%% file: tables/dataset_splits.tex
\begin{table}[H]
  \centering
  \small
  \caption{%
    Dataset split sizes and basic trajectory statistics.
    Val., Succ., and Avg abbreviate validation, success rate, and average.
    A dash in Cal. means calibration is held out internally from the training trajectories rather than stored as a separate outer partition.
  }
  \label{tab:datasets}
  \begin{tabular}{lrrrrrrrr}
    \toprule
    Benchmark     & Total    & Train    & Val     & Test    & Tasks & Succ.\  & Avg steps \\
    \midrule
    WebArena      & 4{,}427  & 3{,}486  & 448     & 493     & 812   & 7.9\%   & 8.6       \\
    $\tau^2$-Bench     & 10{,}832 & 8{,}272  & 736     & 1{,}824 & 392   & 66.3\%  & 15.5      \\
    SkillsBench   & 10{,}951 & 8{,}314  & 735     & 1{,}902 & 89    & 27.5\%  & 32.4      \\
    TerminalBench & 34{,}397 & 24{,}077 & 3{,}440 & 3{,}440 & 89    & 32.2\%  & 36.9      \\
    \bottomrule
  \end{tabular}
\end{table}

%% file: tables/label_statistics.tex
\begin{table}[H]
\centering
\small
\caption{%
  Prefix label statistics and failure observability on the test split ($H{=}3$).
  Labels use the inclusive horizon convention $T-t \leq H$.
  Positive prefix rate $r$ equals the random-baseline AUPRC.
  Verifier-failure rate (VF\%) is the fraction of failed trajectories where failure is verifier-side
  (structurally opaque to the prefix monitor).%
}
\label{tab:label_stats}
\begin{tabular}{lrrlr}
\toprule
Benchmark     & Pos.\ prefix rate $r$ & VF\% & Dominant failure type    & Observability tier \\
\midrule
WebArena      & 36.3\%                & —    & task failure             & step-local         \\
$\tau^2$-Bench     & 8.9\%                 & —    & env assertion / db comm  & step-local         \\
SkillsBench   & 9.2\%                 & 70.9 & verifier fail            & verifier-side      \\
TerminalBench & 7.0\%                 & —    & reward zero (coarse)     & struct.\ coarse    \\
\bottomrule
\end{tabular}
\end{table}

%% file: tables/alert_lead_time.tex
\begin{table}[t]
  \centering
  \caption{Trajectory-level first-alert diagnostics show that high prefix ranking need not imply early alarm utility at low false-alarm rate (FAR).
  Thresholds are selected by calibration-set successful-trajectory FAR constraints ($H{=}3$; mean\,$\pm\,\sigma$ over 3 seeds).}
  \label{tab:lead_time}
  \scriptsize
  \setlength{\tabcolsep}{3pt}
  \renewcommand{\arraystretch}{0.92}
  \resizebox{\linewidth}{!}{%
    \begin{tabular}{lcccccc}
      \toprule
      Benchmark     & Cal FAR cap & Test FAR           & Fail alert recall & Early fail recall & Alert precision   & Uncond. lead time \\
      \midrule
      WebArena      & $0.05$ & $0.016 \pm 0.027$ & $0.107 \pm 0.053$ & $0.003 \pm 0.001$ & $0.991 \pm 0.015$ & $0.010 \pm 0.003$ \\
      WebArena      & $0.10$ & $0.079 \pm 0.014$ & $0.287 \pm 0.058$ & $0.007 \pm 0.004$ & $0.974 \pm 0.006$ & $0.026 \pm 0.018$ \\
      WebArena      & $0.20$ & $0.151 \pm 0.050$ & $0.429 \pm 0.034$ & $0.008 \pm 0.006$ & $0.968 \pm 0.010$ & $0.042 \pm 0.012$ \\
      \midrule
      $\tau^2$-Bench     & $0.05$ & $0.044 \pm 0.000$ & $0.965 \pm 0.002$ & $0.087 \pm 0.007$ & $0.919 \pm 0.001$ & $0.070 \pm 0.003$ \\
      $\tau^2$-Bench     & $0.10$ & $0.089 \pm 0.015$ & $0.979 \pm 0.006$ & $0.192 \pm 0.029$ & $0.851 \pm 0.022$ & $0.106 \pm 0.007$ \\
      $\tau^2$-Bench     & $0.20$ & $0.185 \pm 0.010$ & $0.982 \pm 0.006$ & $0.382 \pm 0.005$ & $0.733 \pm 0.012$ & $0.169 \pm 0.001$ \\
      \midrule
      SkillsBench   & $0.05$ & $0.046 \pm 0.022$ & $0.899 \pm 0.076$ & $0.017 \pm 0.005$ & $0.985 \pm 0.007$ & $0.008 \pm 0.003$ \\
      SkillsBench   & $0.10$ & $0.105 \pm 0.008$ & $0.954 \pm 0.018$ & $0.039 \pm 0.012$ & $0.967 \pm 0.003$ & $0.017 \pm 0.006$ \\
      SkillsBench   & $0.20$ & $0.205 \pm 0.007$ & $0.956 \pm 0.016$ & $0.082 \pm 0.015$ & $0.938 \pm 0.001$ & $0.036 \pm 0.005$ \\
      \midrule
      TerminalBench & $0.05$ & $0.046 \pm 0.002$ & $0.952 \pm 0.010$ & $0.079 \pm 0.031$ & $0.977 \pm 0.001$ & $0.112 \pm 0.043$ \\
      TerminalBench & $0.10$ & $0.101 \pm 0.004$ & $0.965 \pm 0.003$ & $0.178 \pm 0.017$ & $0.952 \pm 0.002$ & $0.215 \pm 0.020$ \\
      TerminalBench & $0.20$ & $0.198 \pm 0.014$ & $0.970 \pm 0.004$ & $0.335 \pm 0.015$ & $0.910 \pm 0.006$ & $0.346 \pm 0.018$ \\
      \bottomrule
    \end{tabular}%
  }
  \smallskip
  \begin{flushleft}
    \footnotesize
    False-alarm rate (FAR) is the fraction of successful trajectories with any alert; early recall requires the first failed-trajectory alert to fire before the terminal $H$-step label window.
    Lead time is $(T_i-a_i)/T_i$ averaged over all failed trajectories, with missed failures counted as $0$.
  \end{flushleft}
\end{table}

%% file: tables/main_table_auxiliary_metrics.tex
\begin{table}[H]
  \centering
  \caption{Auxiliary diagnostics for each Table~\ref{tab:main_results} cell.}
  \label{tab:main_table_aux_metrics}
  \begingroup
  \tiny
  \setlength{\tabcolsep}{1.8pt}
  \renewcommand{\arraystretch}{0.68}
  \resizebox{0.975\linewidth}{!}{%
  \begin{tabular}{@{}llrrrrrrrrrr@{}}
      \toprule
      \multicolumn{2}{c}{} & \multicolumn{2}{c}{Sample} & \multicolumn{4}{c}{Ranking / calibration} & \multicolumn{4}{c}{Operating point} \\
      \cmidrule(lr){3-4}\cmidrule(lr){5-8}\cmidrule(l){9-12}
      Method & Bench. & $N$ & $\pi_+$ & AP & ROC & ECE & Br. & Acc. & P & R & F1/FPR \\
      \midrule
      \multicolumn{12}{@{}l}{\textit{LLM baselines}} \\
      \addlinespace[0.08em]
      GPT-5.4-mini & WebArena & 200 & 43.0\% & 0.407 & 0.463 & 0.451 & 0.461 & 0.467 & 0.439 & 0.847 & 0.578 / 0.821 \\
       & $\tau^2$ & 200 & 9.5\% & 0.302 & 0.593 & 0.251 & 0.207 & 0.685 & 0.145 & 0.474 & 0.222 / 0.293 \\
       & Skills & 200 & 7.0\% & 0.101 & 0.375 & 0.246 & 0.197 & 0.759 & 0.100 & 0.286 & 0.148 / 0.203 \\
       & Terminal & 200 & 10.0\% & 0.127 & 0.562 & 0.364 & 0.314 & 0.625 & 0.160 & 0.650 & 0.257 / 0.378 \\
      DeepSeek-V4 & WebArena & 200 & 43.0\% & 0.450 & 0.532 & 0.477 & 0.474 & 0.445 & 0.435 & 0.977 & 0.602 / 0.956 \\
       & $\tau^2$ & 200 & 9.5\% & 0.396 & 0.827 & 0.352 & 0.233 & 0.610 & 0.183 & 0.895 & 0.304 / 0.420 \\
       & Skills & 200 & 7.0\% & 0.080 & 0.534 & 0.198 & 0.188 & 0.740 & 0.068 & 0.214 & 0.103 / 0.220 \\
       & Terminal & 200 & 10.0\% & 0.107 & 0.506 & 0.458 & 0.406 & 0.472 & 0.117 & 0.650 & 0.198 / 0.547 \\
      \midrule
      \multicolumn{12}{@{}l}{\textit{Non-sequential supervised controls}} \\
      \addlinespace[0.08em]
      TF-IDF+LR & WebArena & 4,548 & 36.3\% & 0.818 & 0.877 & 0.066 & 0.136 & 0.807 & 0.713 & 0.782 & 0.746 / 0.179 \\
       & $\tau^2$ & 28.0k & 8.9\% & 0.548 & 0.905 & 0.192 & 0.125 & 0.896 & 0.443 & 0.685 & 0.538 / 0.084 \\
       & Skills & 62.9k & 9.2\% & 0.292 & 0.750 & 0.384 & 0.329 & 0.558 & 0.149 & 0.809 & 0.252 / 0.468 \\
       & Terminal & 124.9k & 7.0\% & 0.240 & 0.760 & 0.370 & 0.203 & 0.872 & 0.250 & 0.406 & 0.309 / 0.092 \\
      StepView MLP & WebArena & 4,548 & 36.3\% & 0.940 & 0.975 & 0.053 & 0.059 & 0.932 & 0.867 & 0.960 & 0.911 / 0.083 \\
       & $\tau^2$ & 28.0k & 8.9\% & 0.656 & 0.917 & 0.075 & 0.076 & 0.918 & 0.532 & 0.668 & 0.592 / 0.057 \\
       & Skills & 62.9k & 9.2\% & 0.281 & 0.786 & 0.270 & 0.263 & 0.853 & 0.305 & 0.468 & 0.370 / 0.108 \\
       & Terminal & 124.9k & 7.0\% & 0.521 & 0.887 & 0.213 & 0.125 & 0.931 & 0.510 & 0.472 & 0.490 / 0.034 \\
      \midrule
      \multicolumn{12}{@{}l}{\textit{PPM baseline}} \\
      \addlinespace[0.08em]
      PPM LSTM & WebArena & 4,548 & 36.3\% & 0.382 & 0.524 & 0.144 & 0.252 & 0.368 & 0.364 & 0.989 & 0.532 / 0.985 \\
       & $\tau^2$ & 28.0k & 8.9\% & 0.231 & 0.812 & 0.333 & 0.215 & 0.785 & 0.241 & 0.658 & 0.352 / 0.202 \\
       & Skills & 62.9k & 9.2\% & 0.089 & 0.519 & 0.387 & 0.241 & 0.301 & 0.103 & 0.851 & 0.183 / 0.755 \\
       & Terminal & 124.9k & 7.0\% & 0.093 & 0.583 & 0.411 & 0.237 & 0.743 & 0.101 & 0.333 & 0.154 / 0.226 \\
      \midrule
      \multicolumn{12}{@{}l}{\textit{Raw-text control}} \\
      \addlinespace[0.08em]
      Raw-text DFA & WebArena & 4,548 & 36.3\% & 0.745 & 0.853 & 0.028 & 0.138 & 0.802 & 0.710 & 0.804 & 0.747 / 0.199 \\
       & $\tau^2$ & 28.0k & 8.9\% & 0.222 & 0.728 & 0.008 & 0.073 & 0.830 & 0.249 & 0.442 & 0.317 / 0.133 \\
       & Skills & 62.9k & 9.2\% & 0.147 & 0.610 & 0.009 & 0.080 & 0.763 & 0.156 & 0.351 & 0.215 / 0.195 \\
       & Terminal & 124.9k & 7.0\% & 0.137 & 0.648 & 0.003 & 0.064 & 0.834 & 0.160 & 0.311 & 0.210 / 0.126 \\
      Raw-text FSM & WebArena & 4,548 & 36.3\% & 0.639 & 0.714 & 0.201 & 0.241 & 0.647 & 0.532 & 0.681 & 0.585 / 0.373 \\
       & $\tau^2$ & 28.0k & 8.9\% & 0.466 & 0.862 & 0.012 & 0.060 & 0.901 & 0.460 & 0.634 & 0.533 / 0.073 \\
       & Skills & 62.9k & 9.2\% & 0.260 & 0.736 & 0.036 & 0.077 & 0.877 & 0.337 & 0.344 & 0.339 / 0.069 \\
       & Terminal & 124.9k & 7.0\% & 0.272 & 0.749 & 0.057 & 0.067 & 0.904 & 0.345 & 0.390 & 0.365 / 0.057 \\
      Raw-text Trans. & WebArena & 4,548 & 36.3\% & 0.854 & 0.917 & 0.043 & 0.105 & 0.866 & 0.801 & 0.840 & 0.819 / 0.119 \\
       & $\tau^2$ & 28.0k & 8.9\% & 0.597 & 0.887 & 0.025 & 0.055 & 0.921 & 0.560 & 0.520 & 0.539 / 0.040 \\
       & Skills & 62.9k & 9.2\% & 0.315 & 0.814 & 0.021 & 0.074 & 0.883 & 0.367 & 0.377 & 0.372 / 0.066 \\
       & Terminal & 124.9k & 7.0\% & 0.363 & 0.813 & 0.029 & 0.059 & 0.914 & 0.388 & 0.388 & 0.388 / 0.046 \\
      Raw-text GRU & WebArena & 4,548 & 36.3\% & 0.871 & 0.923 & 0.052 & 0.107 & 0.864 & 0.795 & 0.842 & 0.818 / 0.124 \\
       & $\tau^2$ & 28.0k & 8.9\% & 0.554 & 0.883 & 0.026 & 0.057 & 0.916 & 0.529 & 0.536 & 0.532 / 0.047 \\
       & Skills & 62.9k & 9.2\% & 0.315 & 0.817 & 0.051 & 0.080 & 0.891 & 0.388 & 0.316 & 0.348 / 0.051 \\
       & Terminal & 124.9k & 7.0\% & 0.370 & 0.815 & 0.046 & 0.059 & 0.917 & 0.405 & 0.385 & 0.395 / 0.043 \\
      \midrule
      \multicolumn{12}{@{}l}{\textit{PrefixGuard monitors}} \\
      \addlinespace[0.08em]
      PG-DFA & WebArena & 4,548 & 36.3\% & 0.792 & 0.887 & 0.015 & 0.119 & 0.828 & 0.736 & 0.822 & 0.776 / 0.169 \\
       & $\tau^2$ & 28.0k & 8.9\% & 0.316 & 0.772 & 0.009 & 0.066 & 0.862 & 0.413 & 0.567 & 0.442 / 0.109 \\
       & Skills & 62.9k & 9.2\% & 0.190 & 0.680 & 0.009 & 0.080 & 0.803 & 0.211 & 0.418 & 0.281 / 0.158 \\
       & Terminal & 124.9k & 7.0\% & 0.184 & 0.695 & 0.001 & 0.061 & 0.865 & 0.272 & 0.384 & 0.303 / 0.098 \\
      PG-FSM & WebArena & 4,548 & 36.3\% & 0.837 & 0.866 & 0.175 & 0.155 & 0.852 & 0.836 & 0.736 & 0.782 / 0.082 \\
       & $\tau^2$ & 28.0k & 8.9\% & 0.614 & 0.880 & 0.014 & 0.053 & 0.926 & 0.579 & 0.602 & 0.589 / 0.043 \\
       & Skills & 62.9k & 9.2\% & 0.273 & 0.763 & 0.050 & 0.077 & 0.866 & 0.330 & 0.436 & 0.375 / 0.090 \\
       & Terminal & 124.9k & 7.0\% & 0.447 & 0.812 & 0.038 & 0.051 & 0.936 & 0.552 & 0.471 & 0.508 / 0.029 \\
      PG-Trans. & WebArena & 4,548 & 36.3\% & 0.892 & 0.948 & 0.030 & 0.080 & 0.899 & 0.837 & 0.895 & 0.865 / 0.099 \\
       & $\tau^2$ & 28.0k & 8.9\% & 0.710 & 0.925 & 0.017 & 0.046 & 0.938 & 0.680 & 0.581 & 0.626 / 0.027 \\
       & Skills & 62.9k & 9.2\% & 0.478 & 0.829 & 0.054 & 0.073 & 0.886 & 0.402 & 0.455 & 0.425 / 0.070 \\
       & Terminal & 124.9k & 7.0\% & 0.555 & 0.860 & 0.027 & 0.047 & 0.939 & 0.582 & 0.463 & 0.515 / 0.025 \\
      PG-GRU & WebArena & 4,548 & 36.3\% & 0.900 & 0.952 & 0.028 & 0.079 & 0.899 & 0.845 & 0.882 & 0.863 / 0.092 \\
       & $\tau^2$ & 28.0k & 8.9\% & 0.696 & 0.917 & 0.009 & 0.045 & 0.929 & 0.599 & 0.622 & 0.610 / 0.041 \\
       & Skills & 62.9k & 9.2\% & 0.533 & 0.844 & 0.075 & 0.074 & 0.900 & 0.462 & 0.504 & 0.481 / 0.060 \\
       & Terminal & 124.9k & 7.0\% & 0.557 & 0.860 & 0.033 & 0.046 & 0.943 & 0.639 & 0.435 & 0.517 / 0.019 \\
      \bottomrule
  \end{tabular}%
  }
  \endgroup
  \vspace{0.05em}
  \parbox{\linewidth}{\tiny\raggedright
Rows report locked test-split prefix means from manifest artifacts. $N$ is prefix count, $\pi_+$ is positive-prefix prevalence, Br. is Brier score, and PG abbreviates PrefixGuard.
  AP follows the score family used in Table~\ref{tab:main_results}. Operating metrics use the scored LLM subset with threshold $p_{\mathrm{fail}}\ge 0.5$ and stored or replayed non-LLM thresholds; replay-mismatched runs are excluded.}
\end{table}

\begin{figure}[H]
  \centering
  \includegraphics[width=\linewidth]{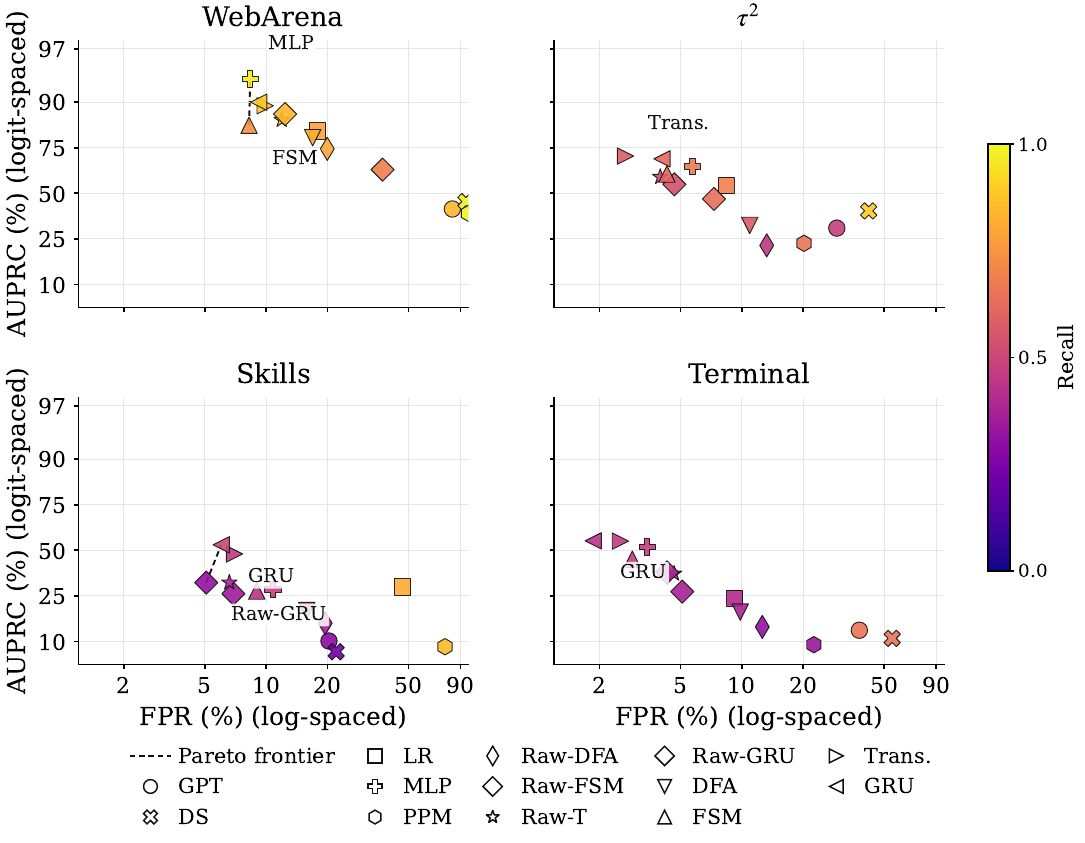}
  \caption{AUPRC--false-positive-rate Pareto diagnostic for the main-table cells.
  Each panel is one benchmark; marker shape identifies the method and marker color encodes recall at the same alert threshold, with warmer colors indicating higher recall.
  The dashed line traces the non-dominated frontier where no method has both higher AUPRC and lower false-positive rate.
  LLM baselines are evaluated on their 200 scored prefixes, while other methods use their locked test artifacts.}
  \label{fig:main_table_ap_fpr_pareto}
\end{figure}

\begin{figure}[H]
  \centering
  \includegraphics[width=\linewidth]{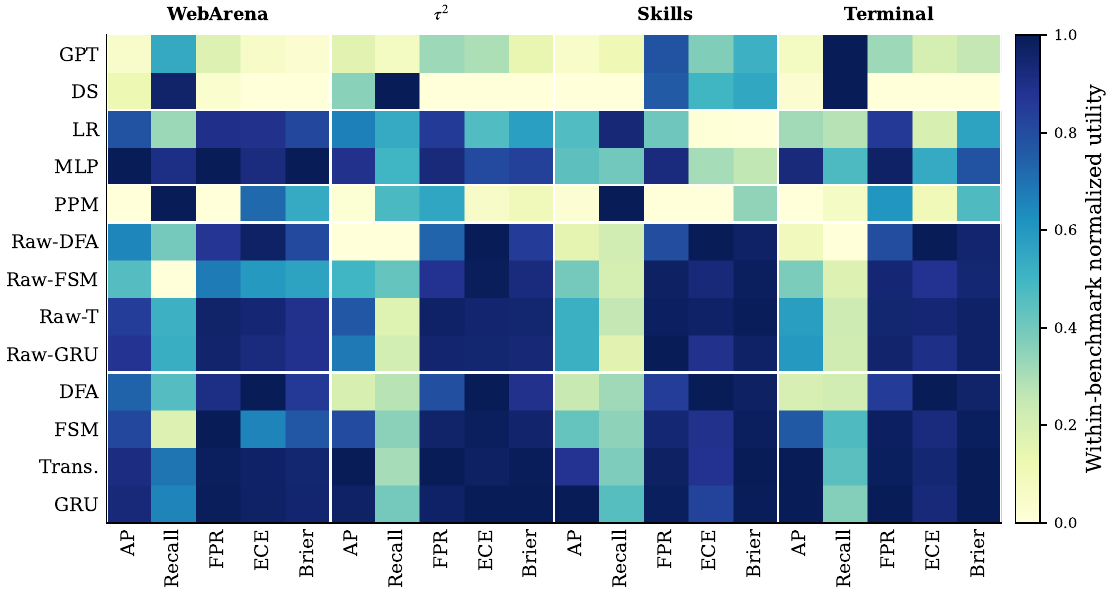}
  \caption{Normalized diagnostic heatmap for the same main-table cells.
  Colors are normalized within each benchmark and metric across methods, with higher color always better; ECE, Brier score, and false-positive rate are inverted before normalization.
  The heatmap is a visual index for multi-metric tradeoffs, not a replacement for the raw values in Table~\ref{tab:main_table_aux_metrics}.}
  \label{fig:main_table_metric_heatmap}
\end{figure}

%% file: tables/supervised_prefix_controls.tex
\begin{table}[t]
  \centering
  \caption{Appendix diagnostic non-sequential supervised prefix-signal probes using the same $H{=}3$ labels and held-out splits as the main experiments.}
  \label{tab:supervised_prefix_controls}
  \footnotesize
  \setlength{\tabcolsep}{3pt}
  \begin{tabular}{lrrrrrrrr}
    \toprule
                  & \multicolumn{4}{c}{TF-IDF+LR} & \multicolumn{4}{c}{StepView MLP}                                                 \\
    \cmidrule(lr){2-5} \cmidrule(lr){6-9}
    Benchmark     & AP                            & ROC                              & ECE   & Br.   & AP    & ROC   & ECE   & Br.   \\
    \midrule
    WebArena      & 0.818                         & 0.877                            & 0.066 & 0.136 & 0.940 & 0.975 & 0.053 & 0.059 \\
    $\tau^2$-Bench     & 0.548                         & 0.905                            & 0.192 & 0.125 & 0.656 & 0.917 & 0.075 & 0.076 \\
    SkillsBench   & 0.292                         & 0.750                            & 0.384 & 0.329 & 0.281 & 0.786 & 0.270 & 0.263 \\
    TerminalBench & 0.240                         & 0.760                            & 0.370 & 0.203 & 0.521 & 0.887 & 0.213 & 0.125 \\
    \bottomrule
  \end{tabular}
  \smallskip
  \begin{flushleft}
    \footnotesize
    AP: AUPRC; ROC: AUROC; Br.: Brier; LR: logistic regression; MLP: multilayer perceptron.
    These probes test whether observed prefixes contain recoverable warning signal.
    StepView MLP pools observed StepView TF-IDF vectors and has no recurrent, Transformer, FSM, DFA, or causal monitor state.
  \end{flushleft}
\end{table}

%% file: tables/ppm_lstm_controls.tex
\begin{table}[t]
  \centering
  \caption{Outcome-oriented predictive process monitoring (PPM) activity-LSTM control on the locked test split.
    Values are mean\,$\pm\,\sigma$ over three seeds.
    The control uses one-hot categorical StepView activities with a single-layer LSTM and no TF-IDF text features, learned PrefixGuard symbols, FSM, or DFA.}
  \label{tab:ppm_lstm_controls}
  \footnotesize
  \setlength{\tabcolsep}{3pt}
  \resizebox{\linewidth}{!}{%
  \begin{tabular}{lcccccccc}
    \toprule
    Benchmark & PPM AP & ROC & ECE & Br. & $r$ & Raw-GRU AP & PG-GRU AP & $\Delta$PG \\
    \midrule
    WebArena      & $0.382{\pm}0.004$ & $0.524{\pm}0.006$ & $0.144{\pm}0.002$ & $0.252{\pm}0.001$ & 0.363 & 0.871 & 0.900 & $+0.518$ \\
    $\tau^2$-Bench     & $0.231{\pm}0.003$ & $0.812{\pm}0.001$ & $0.333{\pm}0.041$ & $0.215{\pm}0.035$ & 0.089 & 0.554 & 0.696 & $+0.465$ \\
    SkillsBench   & $0.089{\pm}0.001$ & $0.519{\pm}0.001$ & $0.387{\pm}0.014$ & $0.241{\pm}0.013$ & 0.092 & 0.315 & 0.533 & $+0.444$ \\
    TerminalBench & $0.093{\pm}0.000$ & $0.583{\pm}0.000$ & $0.411{\pm}0.014$ & $0.237{\pm}0.011$ & 0.070 & 0.370 & 0.557 & $+0.464$ \\
    \bottomrule
  \end{tabular}%
  }
  \smallskip
  \begin{flushleft}
    \footnotesize
    AP: AUPRC; ROC: AUROC; Br.: Brier; $r$: positive-prefix rate; $\Delta$PG: PG-GRU AP minus mean PPM AP.
    PPM runs use $H{=}3$, seeds 13/42/123, a train-only categorical activity vocabulary capped at 4096 activities, hidden size 64, one LSTM layer, three epochs, and the same split protocols as the supervised-prefix controls.
  \end{flushleft}
\end{table}

%% file: tables/continuous_stepview_sequence_controls.tex
\begin{table}[H]
  \centering
  \caption{WebArena continuous StepView sequence controls.
    Continuous sequence controls remove the discrete PrefixGuard abstraction while retaining StepView TF-IDF step embeddings and causal sequence scoring.}
  \label{tab:continuous_stepview_sequence_controls}
  \footnotesize
  \resizebox{\linewidth}{!}{%
    \begin{tabular}{llrrrr}
      \toprule
      Category        & Model                                 & AUPRC & AUROC & ECE   & Brier \\
      \midrule
      Flat prefix     & TF-IDF prefix + logistic regression   & 0.818 & 0.877 & 0.066 & 0.136 \\
      Pooled StepView & StepView pooled MLP                   & 0.940 & 0.975 & 0.053 & 0.059 \\
      Continuous seq. & Continuous StepView GRU               & 0.787 & 0.845 & 0.107 & 0.162 \\
      Continuous seq. & Continuous StepView Transformer       & 0.819 & 0.870 & 0.112 & 0.146 \\
      PrefixGuard     & StepView + discrete abstraction + GRU & 0.900 & --    & --    & --    \\
      \bottomrule
    \end{tabular}%
  }
  \smallskip
  \begin{flushleft}
    \footnotesize
    Control rows are single-seed WebArena results; seq. abbreviates sequence and MLP abbreviates multilayer perceptron.
    PrefixGuard-GRU is the main-table three-seed mean reference.
  \end{flushleft}
\end{table}

%% file: tables/position_task_prior_controls.tex
\begin{table}[t]
  \centering
  \caption{Cross-benchmark position and task-prior controls on the held-out test split.}
  \label{tab:position_task_prior_controls}
  \small
  \begin{tabular}{lrrrrrrr}
    \toprule
    & & \multicolumn{2}{c}{t-only} & \multicolumn{2}{c}{t+T oracle} & \multicolumn{2}{c}{task-prior} \\
    \cmidrule(lr){3-4}\cmidrule(lr){5-6}\cmidrule(lr){7-8}
    Benchmark & $r$ & AUPRC & AUROC & AUPRC & AUROC & AUPRC & AUROC \\
    \midrule
    WebArena & 0.363 & 0.435 & 0.554 & 0.923 & 0.974 & 0.507 & 0.657 \\
    $\tau^2$-Bench & 0.089 & 0.261 & 0.825 & 0.454 & 0.924 & 0.145 & 0.656 \\
    SkillsBench & 0.092 & 0.099 & 0.548 & 0.761 & 0.984 & 0.157 & 0.620 \\
    TerminalBench & 0.070 & 0.093 & 0.583 & 0.697 & 0.982 & 0.222 & 0.755 \\
    \bottomrule
  \end{tabular}
\end{table}

%% file: tables/content_scrambled_controls.tex
\begin{table}[t]
  \centering
  \caption{Corrected no-leakage content-scrambled controls. Each scrambled prefix permutes only already-visible steps and keeps the original prefix label; higher AUPRC is better.}
  \label{tab:content_scrambled_controls}
  \footnotesize
  \setlength{\tabcolsep}{4pt}
  \begin{tabular}{lrrrr}
    \toprule
    Benchmark     & $r$   & Original & Scrambled & $\Delta$ \\
    \midrule
    WebArena      & 0.363 & 0.908    & 0.901     & $-0.007$ \\
    $\tau^2$-Bench     & 0.089 & 0.687    & 0.681     & $-0.006$ \\
    SkillsBench   & 0.092 & 0.526    & 0.269     & $-0.257$ \\
    TerminalBench & 0.070 & 0.548    & 0.375     & $-0.173$ \\
    \bottomrule
  \end{tabular}
  \smallskip
  \begin{flushleft}
    \footnotesize
    $r$ is the prefix positive rate. WebArena and $\tau^2$-Bench are from corrected canonical summaries.
    SkillsBench is a soft-only value from the corrected rerun's best soft-validation epoch; final DFA/RPNI induction was canceled because this control only uses soft AUPRC.
    TerminalBench uses the completed corrected scrambled-only soft rerun; final paired DFA/RPNI validation was skipped by design because this control only uses soft AUPRC.
  \end{flushleft}
\end{table}

%% file: tables/crossbench_stepview_field_drop_full.tex
\begin{table*}[t]
  \centering
  \caption{StepView field ablations across benchmarks.
  Each cell reports locked-test AUPRC after masking the named StepView field at train and test time, with the change from the matched all-fields StepView setting in parentheses when available.
  WebArena additionally includes the no-status, no-args, and observation-only controls from the original field audit.
  Non-WebArena completion cells use the same locked-test soft-AUPRC protocol and skip final DFA/RPNI induction because this ablation targets soft monitor ranking.
  Trans. abbreviates Transformer and Obs. abbreviates observation.}
  \label{tab:ablation}
  \label{tab:crossbench_stepview_field_drop_full}
  \footnotesize
  \setlength{\tabcolsep}{3pt}
  \resizebox{\textwidth}{!}{%
    \begin{tabular}{llccccccc}
      \toprule
      Benchmark & Head & All fields & No \texttt{tool} & No \texttt{status} & No \texttt{args} & No \texttt{result} & No \texttt{args+result} & Obs. only \\
      \midrule
      WebArena & GRU & 0.883 & 0.878 ($-0.005$) & 0.877 ($-0.006$) & 0.905 ($+0.022$) & 0.679 ($-0.204$) & 0.655 ($-0.228$) & 0.834 ($-0.049$) \\
      WebArena & FSM & 0.848 & 0.843 ($-0.005$) & 0.845 ($-0.003$) & 0.839 ($-0.009$) & 0.654 ($-0.194$) & 0.540 ($-0.308$) & 0.815 ($-0.033$) \\
      WebArena & Trans. & 0.884 & 0.896 ($+0.012$) & 0.885 ($+0.001$) & 0.901 ($+0.017$) & 0.706 ($-0.178$) & 0.689 ($-0.195$) & 0.814 ($-0.070$) \\
      \midrule
      $\tau^2$-Bench & GRU & 0.702 & 0.711 ($+0.009$) & 0.694 ($-0.007$) & 0.708 ($+0.006$) & 0.705 ($+0.004$) & 0.696 ($-0.006$) & 0.436 ($-0.266$) \\
      $\tau^2$-Bench & FSM & 0.637 & 0.613 ($-0.024$) & 0.571 ($-0.066$) & 0.620 ($-0.017$) & 0.539 ($-0.098$) & 0.423 ($-0.214$) & 0.412 ($-0.225$) \\
      $\tau^2$-Bench & Trans. & 0.697 & 0.718 ($+0.021$) & 0.715 ($+0.017$) & 0.711 ($+0.014$) & 0.701 ($+0.004$) & 0.709 ($+0.012$) & 0.448 ($-0.249$) \\
      \midrule
      SkillsBench & GRU & 0.549 & 0.552 ($+0.003$) & 0.534 ($-0.015$) & 0.553 ($+0.004$) & 0.542 ($-0.008$) & 0.514 ($-0.035$) & 0.523 ($-0.026$) \\
      SkillsBench & FSM & 0.393 & 0.425 ($+0.032$) & 0.404 ($+0.011$) & 0.410 ($+0.017$) & 0.440 ($+0.047$) & 0.395 ($+0.002$) & 0.416 ($+0.023$) \\
      SkillsBench & Trans. & 0.500 & 0.533 ($+0.034$) & 0.516 ($+0.016$) & 0.536 ($+0.036$) & 0.516 ($+0.016$) & 0.498 ($-0.002$) & 0.500 ($+0.001$) \\
      \midrule
      TerminalBench & GRU & 0.550 & 0.577 ($+0.027$) & 0.444 ($-0.106$) & 0.574 ($+0.024$) & 0.581 ($+0.032$) & 0.577 ($+0.027$) & 0.279 ($-0.270$) \\
      TerminalBench & FSM & 0.448 & 0.440 ($-0.009$) & 0.321 ($-0.127$) & 0.403 ($-0.045$) & 0.462 ($+0.014$) & 0.515 ($+0.067$) & 0.238 ($-0.210$) \\
      TerminalBench & Trans. & 0.548 & 0.577 ($+0.029$) & 0.443 ($-0.105$) & 0.573 ($+0.025$) & 0.584 ($+0.036$) & 0.577 ($+0.028$) & 0.289 ($-0.259$) \\
      \bottomrule
    \end{tabular}}
\end{table*}

%% file: tables/horizon_sensitivity.tex
\begin{table}[H]
\centering
\caption{Validation-only horizon sensitivity for PrefixGuard-GRU. Each row retrains the monitor with the listed $H$ under the benchmark's main recipe; no locked-test results are used for horizon selection.}
\label{tab:horizon}
\scriptsize
\resizebox{\linewidth}{!}{
\begin{tabular}{llcccccc}
\toprule
Benchmark & $H$ & Pos. rate & Score AUPRC & Score AUROC & ECE & Lead & DFA AUPRC \\
\midrule
WebArena & 1 & 0.219 & \textbf{0.923} & \textbf{0.985} & \textbf{0.012} & 0.182 & 0.725 \\
WebArena & 3 & 0.396 & 0.908 & 0.945 & 0.029 & 0.431 & 0.769 \\
WebArena & 5 & 0.514 & 0.903 & 0.894 & 0.053 & 0.571 & \textbf{0.783} \\
\midrule
$\tau^2$-Bench & 1 & 0.046 & \textbf{0.862} & \textbf{0.971} & \textbf{0.011} & 0.030 & 0.194 \\
$\tau^2$-Bench & 3 & 0.092 & 0.687 & 0.917 & 0.016 & 0.129 & \textbf{0.270} \\
$\tau^2$-Bench & 5 & 0.138 & 0.653 & 0.891 & 0.016 & 0.204 & 0.248 \\
\midrule
SkillsBench & 1 & 0.049 & \textbf{0.703} & \textbf{0.895} & \textbf{0.013} & 0.021 & 0.143 \\
SkillsBench & 3 & 0.097 & 0.526 & 0.814 & 0.029 & 0.073 & 0.171 \\
SkillsBench & 5 & 0.144 & 0.482 & 0.768 & 0.047 & 0.132 & \textbf{0.216} \\
\midrule
TerminalBench & 1 & 0.035 & \textbf{0.700} & \textbf{0.912} & \textbf{0.010} & 0.125 & 0.206 \\
TerminalBench & 3 & 0.067 & 0.548 & 0.858 & 0.035 & 0.263 & 0.138 \\
TerminalBench & 5 & 0.096 & 0.526 & 0.837 & 0.051 & 0.423 & \textbf{0.214} \\
\bottomrule
\end{tabular}
}
\vspace{0.3em}
\begin{flushleft}
\footnotesize
Score AUPRC/AUROC are computed from continuous risk scores before thresholding.
Bold marks the best validation value per benchmark; larger $H$ increases positive-prefix prevalence and lead time.
\end{flushleft}
\end{table}

%% file: content/proof.tex
\section{Observability Ceiling: Proofs}
\label{app:observability_proofs}

We prove the area under the precision-recall curve (AUPRC) observability ceiling, Proposition~\ref{prop:auprc_ceiling}, originally stated in the diagnostic lens of \S\ref{sec:observability_ceiling}.
The proposition is an \emph{upper bound} on population ranking performance for a fixed observable-positive fraction $\pi$.
Only after using the monotonicity of the AUPRC bound can one invert the population statement to obtain a lower bound on the $\pi$ required to reach a given population AUPRC.
The benchmark figure in \S\ref{sec:rq4} instead uses the bound in the forward direction, which calibrates AUPRC scale without estimating the true population $\pi$.

\paragraph{Shared setup.}
Let $(\Omega,\mathcal{F},P)$ be a probability space.
A \emph{sample} is a prefix--label pair $(x, p)$ with $x \in \mathcal{C}^*$ and $p \in \{0,1\}$, where $p{=}1$ denotes an imminent-failure prefix.
Denote the positive-prefix rate $r = P(p{=}1) \in (0,1)$.

\textbf{Observability model (A2).}
The positive-prefix class is a mixture:
$P(x \mid p{=}1) = \pi P_{\mathrm{obs}} + (1-\pi) P_{\mathrm{neg}}$,
where $P_{\mathrm{obs}}$ is the distribution of observable failed prefixes and
$P_{\mathrm{neg}} := P(x \mid p{=}0)$ is the negative distribution (hidden failures are distributionally identical to negative prefixes).
Let $x^+_{\mathrm{obs}}$ and $x^+_{\mathrm{hid}}$ denote samples from the respective components.

\paragraph{AUPRC observability ceiling (restated from \S\ref{sec:observability_ceiling}).}
Under model (A2), let $r \in (0,1)$ denote the positive-prefix rate.
For any monitor $f$ with continuous score distributions, the population AUPRC, defined as $\int \mathrm{Prec}(t)\,d\,\mathrm{Recall}(t)$ and equivalently as $\int_0^1 \mathrm{Prec}(s)\,ds$ under the continuous recall parametrization, satisfies
\[
    \mathrm{AUPRC}(f) \;\leq\; \mathcal{A}(\pi, r) \;:=\; \pi + \frac{r(1-\pi)^2}{1-\pi r} + \frac{r\pi(1-\pi)(1-r)}{(1-\pi r)^2}\ln\!\frac{1}{\pi r},
\]
with $\mathcal{A}(0,r)=r$ and $\mathcal{A}(1,r)=1$.
The bound is tight over mixture-model instances and $\mathcal{A}(\pi,r)$ is strictly increasing in $\pi$ for fixed $r$.
The empirical \texttt{average\_precision\_score} of Appendix~\ref{app:eval_protocol} is a standard consistent estimator of this population AUPRC under independent and identically distributed (i.i.d.) test sampling with continuous score laws; consistency is not used in the proof.
This use of average precision follows the standard PR-curve evaluation convention, where interpolation and class skew materially affect the area estimate~\citep{davis2006relationship,boyd2012unachievable,boyd2013area}.

Figure~\ref{fig:conditional_auprc_ceiling_curve} plots the conditional ceiling curve induced by the bound for the four benchmark positive-prefix rates and overlays independent mixture-proportion estimation (MPE) diagnostics $\hat{\pi}_{\mathrm{MPE}}$ with the PrefixGuard backend AUPRCs from Table~\ref{tab:main_results}.
Because $\pi$ is latent, the curves should be read conditionally: fixing an assumed observable fraction gives the maximum population AUPRC compatible with that assumption, while the overlaid markers are finite-sample diagnostics rather than certified population $\pi$ values.

\subsection{Proof of Proposition~\ref{prop:auprc_ceiling}}

We work in the population limit with average precision $\mathrm{AP}(f) = \int_0^1 \mathrm{Prec}(s)\,ds$,
where $s$ is recall and we assume continuous score distributions so recall is a continuous function of threshold (no recall jumps); the value assigned at the endpoint $s=0$ is immaterial to the integral.
The formula below is derived for $0<\pi<1$; the endpoints are handled separately.

\textit{Proof.}
Define $R_{\mathrm{obs}}(t) = P(f(x^+_{\mathrm{obs}}) > t)$ and the false-positive rate $\mathrm{FPR}(t) = P(f(x^-) > t)$.
By A2, the recall and precision at threshold $t$ are:
\[
    \mathrm{Recall}(t) = \pi R_{\mathrm{obs}}(t) + (1-\pi)\,\mathrm{FPR}(t),
    \qquad
    \mathrm{Prec}(t) = \frac{r\,\mathrm{Recall}(t)}{r\,\mathrm{Recall}(t) + (1-r)\,\mathrm{FPR}(t)}.
\]
Substituting $\mathrm{Recall}=s$ and eliminating $R_{\mathrm{obs}}$:
\begin{equation}
    \mathrm{Prec}(s, q) = \frac{r\,s}{r\,s + (1-r)\,q}, \label{eq:prec_sp}
\end{equation}
where $q = \mathrm{FPR}$.
Since $R_{\mathrm{obs}} \leq 1$, the recall constraint $s = \pi R_{\mathrm{obs}} + (1-\pi)q$ implies
$q \geq q^*(s) := (s-\pi)/(1-\pi)$ for $s > \pi$ (and $q\geq 0$ for $s\leq\pi$).
Equation~\eqref{eq:prec_sp} is strictly decreasing in $q$ (since $\partial\mathrm{Prec}/\partial q = -rs(1-r)/(rs+(1-r)q)^2 < 0$ for $s>0$), so precision is maximized at minimum FPR:
\[
    \mathrm{Prec}_{\max}(s) = \begin{cases} 1, & s \in [0,\pi], \\ \dfrac{r\,s\,(1-\pi)}{r\,s\,(1-\pi) + (1-r)(s-\pi)}, & s \in (\pi,1]. \end{cases}
\]
Continuity at $s=\pi$: $\mathrm{Prec}_{\max}(\pi^+) \to 1$.
Therefore $\mathrm{AP}(f) = \int_0^1 \mathrm{Prec}(s)\,ds \leq \int_0^1 \mathrm{Prec}_{\max}(s)\,ds =: \mathcal{A}(\pi,r)$.

It remains to evaluate $\mathcal{A}(\pi,r) = \pi + \int_\pi^1 \mathrm{Prec}_{\max}(s)\,ds$.
Substituting $u = s-\pi$, $A = r\pi(1-\pi)$, $B = 1-\pi r$, and using the key identity $A + B(1-\pi) = 1-\pi$:
\[
    \int_\pi^1 \mathrm{Prec}_{\max}(s)\,ds
    = r(1-\pi)\!\left[\frac{1-\pi}{B} + \frac{\pi B - A}{B^2}\ln\!\frac{1}{\pi r}\right],
\]
where $\pi B - A = \pi(1-r)$.
This yields
\[
    \mathcal{A}(\pi,r) = \pi + \frac{r(1-\pi)^2}{1-\pi r} + \frac{r\pi(1-\pi)(1-r)}{(1-\pi r)^2}\ln\!\frac{1}{\pi r}.
\]
Boundary cases: $\mathcal{A}(0,r) = \lim_{\pi\to 0^+}\mathcal{A}(\pi,r) = r$ (since $\pi\ln(1/\pi)\to 0$); $\mathcal{A}(1,r) = 1$.
Numerical check at $\pi{=}r{=}1/2$: $\mathcal{A} = 2/3 + (1/9)\ln 4 \approx 0.821$, matching the tight counterexample to the naive linear bound. \hfill$\square$

\textbf{Constructive tightness.}
The upper envelope is attainable over the mixture-model instance class.
For $0<\pi<1$, choose continuous score distributions with disjoint support: $f(x^+_{\mathrm{obs}})\sim U(1,2)$ and $f(x^-)\sim f(x^+_{\mathrm{hid}})\sim U(0,1)$.
As the threshold moves through $(1,2)$, $\mathrm{FPR}=0$ and recall covers $s\in[0,\pi]$ with precision $1$.
As the threshold moves through $(0,1)$, $R_{\mathrm{obs}}=1$ and $\mathrm{FPR}=(s-\pi)/(1-\pi)$ for $s\in(\pi,1]$.
Thus this construction realizes $\mathrm{Prec}_{\max}(s)$ at every recall level, so its population AP equals $\mathcal{A}(\pi,r)$.
The endpoint $\pi=0$ is tight because hidden positives are distributionally identical to negatives and every monitor has population AP $r$; the endpoint $\pi=1$ is tight because disjoint support gives perfect ranking and AP $1$.

\textbf{Strict monotonicity in $\pi$.}
Let
\[
    G_\pi(s)=
    \begin{cases}
        1,                                                 & s\leq \pi,   \\[2mm]
        \dfrac{r\,s\,(1-\pi)}{r\,s\,(1-\pi)+(1-r)(s-\pi)}, & \pi<s\leq 1.
    \end{cases}
\]
Then $\mathcal{A}(\pi,r)=\int_0^1 G_\pi(s)\,ds$.
For fixed $s\in(0,1)$ and $0\leq\pi<s$,
\[
    \frac{\partial G_\pi(s)}{\partial \pi}
    =
    \frac{r\,s\,(1-r)(1-s)}
    {\left[r\,s(1-\pi)+(1-r)(s-\pi)\right]^2}
    > 0.
\]
For $\pi\geq s$, $G_\pi(s)=1$, so increasing $\pi$ cannot decrease the envelope.
Therefore $G_{\pi_2}(s)\geq G_{\pi_1}(s)$ for $\pi_2>\pi_1$, and the inequality is strict for all $s\in(\pi_1,1)$ except the endpoint $s=1$, a set of positive measure.
Integrating gives $\mathcal{A}(\pi_2,r)>\mathcal{A}(\pi_1,r)$ for $0\leq\pi_1<\pi_2\leq1$.

\textbf{Why $\mathcal{A}(\pi,r) > r + \pi(1-r)$ for $0 < \pi < 1$.}
The linear expression is $\pi + (1-\pi)r$, which would be obtained by assigning precision $1$ to $s\in[0,\pi]$ and precision $r$ to every later recall level.
For $s\in(\pi,1)$,
\[
    G_\pi(s)>r
    \quad\Longleftrightarrow\quad
    s(1-\pi)>s-\pi
    \quad\Longleftrightarrow\quad
    s<1.
\]
The strict inequality holds on a positive-measure interval, hence
\[
    \mathcal{A}(\pi,r)=\pi+\int_\pi^1 G_\pi(s)\,ds
    >
    \pi+(1-\pi)r
    =
    r+\pi(1-r).
\]
The naive bound would hold only if precision for hidden failures were $r$ at every recall level---which fails because already-retrieved observable failures stay in the precision numerator.

\textbf{Benchmark-scale forward ceilings and prefix-evidence audit.}
Figure~\ref{fig:conditional_auprc_ceiling_curve} (§\ref{sec:rq4}) reports forward ceiling curves $\mathcal{A}(\pi,\hat r)$ alongside independent mixture-proportion estimation (MPE) diagnostics $\hat{\pi}_{\mathrm{MPE}}$, PrefixGuard backend AUPRCs from Table~\ref{tab:main_results}, and required-$\pi$ markers for the PG-GRU AUPRC.
The curves use the reported test positive rates from Table~\ref{tab:label_stats}; the horizontal MPE coordinates are estimated by a separate TF-IDF+logistic probe with a trimmed lower-tail CDF-ratio estimator, following the contaminated-distribution, positive-unlabeled, and label-noise MPE view~\citep{blanchard2010semi,scott2013classification,ramaswamy2016mixture,menon2015learning}.
For WebArena, whose trajectories are short, Figure~\ref{fig:conditional_auprc_ceiling_curve} uses the all-prefix $H{=}3$ MPE diagnostic; for the longer benchmarks, it uses the matched non-terminal diagnostic that drops terminal prefixes and matches negatives to the same near-end window.
The required-$\pi$ marker for an observed PG-GRU AUPRC $a$ is the inverse-envelope value
\[
    \pi_{\mathrm{req}}(a,\hat r)=\inf\{\pi\in[0,1]: \mathcal{A}(\pi,\hat r)\geq a\}.
\]
For WebArena, $\tau^2$-Bench, SkillsBench, and TerminalBench, the plotted PG-GRU required-$\pi$ values are $0.776$, $0.621$, $0.430$, and $0.478$, respectively.

They should be read as descriptive finite-sample diagnostics: if an empirical AUPRC lies above $\mathcal{A}(\pi_0,r)$, the exact population analogue would require $\pi>\pi_0$ under the mixture model, but the MPE marker positions are not confidence-certified estimates of the true full-prefix population $\pi$.
The SkillsBench explicit-evidence anchor $\pi_E=0.489$ is computed from $H{=}3$ test prefixes by scanning only observed status/action/result fields for explicit failure evidence, estimating $q_+=0.740$ and $q_-=0.490$, and reporting $\max\{0,(q_+-q_-)/(1-q_-)\}$ independently of learned scorer outputs.

\subsection{MPE Audit Protocol}
\label{app:mpe_audit_protocol}

This audit estimates the horizontal marker positions in Figure~\ref{fig:conditional_auprc_ceiling_curve}.
It is deliberately independent of PrefixGuard scores and of the AUPRC values plotted on the vertical axis.
The goal is not to certify the true population $\pi$, but to obtain reproducible MPE-style diagnostics~\citep{blanchard2010semi,ramaswamy2016mixture} of whether failed prefixes remain distinguishable from negative-prefix references under the stated prefix construction.

\textbf{Prefix construction.}
For each trajectory with $n$ steps, prefixes are indexed by $j\in\{1,\ldots,n\}$ after observing step $j$.
All-prefix WebArena diagnostics use the same $H{=}3$ label rule as the main evaluation: a failed trajectory contributes positive prefixes in the near-end window
\[
    j \geq n-H,\qquad H=3,
\]
and all other prefixes are negative.
Because WebArena trajectories are short, this all-prefix diagnostic retains enough negative prefixes without creating the long early-prefix imbalance that appears in the other benchmarks.
For $\tau^2$-Bench, SkillsBench, and TerminalBench, the matched non-terminal diagnostic first removes the terminal prefix $j=n$, then keeps only non-terminal prefixes in the near-end window $n-H \leq j < n$; failed kept prefixes are positive and successful kept prefixes are negative.
Thus their negative reference distribution is matched to successful near-end prefixes, rather than to all successful prefixes across long trajectories.
The visible text for a prefix is the chronological concatenation of only three observed step fields, \texttt{status}, \texttt{action\_text}, and \texttt{result\_text}, truncated to 1,200 characters per step and 5,000 characters per prefix.
No future suffix, failure bucket, verifier output, PrefixGuard score, or AUPRC-derived quantity is used.

\textbf{Independent probe.}
For each benchmark, the train split is \texttt{fit} when available and otherwise \texttt{train}; the held-out split is \texttt{test}.
We fit a TF-IDF vectorizer on the training prefixes with unigram/bigram features, \texttt{min\_df=2}, sublinear term frequency, and at most 50,000 features.
A logistic-regression probe is then fit with \texttt{solver=liblinear}, $C=0.5$, balanced class weights, and seed 0.
The probe is used only to produce held-out test scores $s(x)$ for the audit prefixes.

\textbf{Cumulative distribution function (CDF)-ratio MPE estimator.}
Let $\widehat F_+(t)=P_n(s(x)\leq t\mid y=1)$ and $\widehat F_-(t)=P_n(s(x)\leq t\mid y=0)$ be empirical CDFs of the independent probe scores on held-out positive and negative prefixes.
Under the mixture model $F_+=\pi F_{\mathrm{obs}}+(1-\pi)F_-$, every measurable lower-score tail satisfies $F_+(A)\geq(1-\pi)F_-(A)$.
We therefore estimate the hidden-negative mixture weight by the trimmed lower-tail ratio
\[
    \widehat\kappa
    =
    \min_{t:\widehat F_-(t)\geq 0.2}
    \frac{\widehat F_+(t)}{\widehat F_-(t)},
    \qquad
    \widehat\pi_{\mathrm{MPE}}=1-\mathrm{clip}_{[0,1]}(\widehat\kappa).
\]
The $0.2$ tail trim prevents a single low-scoring negative prefix from determining the minimum.
Bootstrap intervals resample the held-out positive and negative score arrays separately for 200 replicates and recompute the same estimator.

\begin{table}[H]
  \centering
  \caption{MPE audit sample counts and estimates used for Figure~\ref{fig:conditional_auprc_ceiling_curve}.
    ``MPE $r$'' is the positive rate inside the audit sample, not the benchmark test prevalence used for the ceiling curves.}
  \label{tab:mpe_audit_protocol}
  \small
  \begin{tabular}{llrrrrl}
    \toprule
    Benchmark & Protocol & Train & Test + & Test -- & MPE $r$ & $\hat{\pi}_{\mathrm{MPE}}$ \\
    \midrule
    WebArena      & all-prefix       & 13,152 & 881   & 1,285 & 0.407 & 0.825 [0.748, 0.883] \\
    $\tau^2$-Bench     & matched nonterm. & 24,779 & 1,863 & 3,605 & 0.341 & 0.965 [0.944, 0.982] \\
    SkillsBench   & matched nonterm. & 24,198 & 4,343 & 1,338 & 0.764 & 0.620 [0.516, 0.693] \\
    TerminalBench & matched nonterm. & 40,000 & 6,471 & 3,286 & 0.663 & 0.972 [0.964, 0.984] \\
    \bottomrule
  \end{tabular}
\end{table}

Two caveats follow from this construction.
First, these estimates are finite-sample diagnostics rather than confidence-certified true population $\pi$ values.
Second, full-prefix MPE is sensitive to trajectory length: on long benchmarks, an all-prefix negative pool contains many early prefixes that are easy for the probe to separate from near-end failed prefixes, which pushes $\hat{\pi}_{\mathrm{MPE}}$ upward.
The matched non-terminal audit reduces this stage artifact for the longer benchmarks.
The WebArena marker instead uses the all-prefix diagnostic because the matched non-terminal restriction leaves only 46 held-out successful near-end negatives and produces a coarse interval, $0.746$ [0.529, 0.921]; for WebArena's short trajectories, the all-prefix diagnostic provides a more stable horizontal coordinate while preserving the independent-probe requirement.